\newtheorem{theorem}{Theorem}[section]
\newtheorem{lemma}[theorem]{Lemma}
\title{Language Modeling with Learned Meta-Tokens}
\author{\textbf{\fontsize{11}{12.5}\selectfont{Alok N.~Shah}}$^{1*}$
\quad \textbf{\fontsize{11}{12.5}\selectfont{Khush Gupta}}$^{1*}$ \;
\textbf{\fontsize{11}{12.5}\selectfont{Keshav Ramji}}$^{2*}$ \;
\textbf{\fontsize{11}{12.5}\selectfont{Pratik Chaudhari}}$^{1}$\\
\vspace{-2mm} \\
  \normalsize{$^1$University of Pennsylvania\qquad $^2$IBM Research AI}\\
\normalsize{$^*$ Denotes equal contribution} \\
\normalsize{\texttt{\{alokshah, khushg, pratikc\}@upenn.edu}, \texttt{keshav.ramji@ibm.com}}
}
\begin{document}

\maketitle

\begin{abstract}
  \noindent While modern Transformer-based language models (LMs) have achieved major success in multi-task generalization, they often struggle to capture long-range dependencies within their context window. This work introduces a novel approach using meta-tokens, special tokens injected during pre-training, along with a dedicated meta-attention mechanism to guide LMs to use these tokens. We pre-train a language model with a modified GPT-2 architecture equipped with meta-attention in addition to causal multi-head attention, and study the impact of these tokens on a suite of synthetic tasks. We find that data-efficient language model pre-training on fewer than 100B tokens utilizing meta-tokens and our meta-attention mechanism achieves strong performance on these tasks after fine-tuning. We suggest that these gains arise due to the meta-tokens \textit{sharpening} the positional encoding. This enables them to operate as trainable, content-based landmarks, implicitly compressing preceding context and "caching" it in the meta-token. At inference-time, the meta-token points to relevant context, facilitating length generalization up to 2$\times$ its context window, even after extension with YaRN. We provide further evidence of these behaviors by visualizing model internals to study the residual stream, and assessing the compression quality by information-theoretic analysis on the rate-distortion tradeoff. Our findings suggest that pre-training LMs with meta-tokens offers a simple, data-efficient method to enhance long-context language modeling performance, while introducing new insights into the nature of their behavior towards length generalization.
\end{abstract}

\section{Introduction}

Transformer-based language models (LMs) have showcased remarkable capabilities across diverse language tasks \citep{brown2020language, chowdhery2022palm, openai2023gpt4}. Nevertheless, such models suffer from an inability to capture dependencies spanning over their entire context window. With growing adoption and ever-expanding demands on the context over which the model can process and reason, it is vital to develop methods that facilitate long-context adaptation and length generalization. Despite numerous architectural remedies, including sparse attention \citep{beltagy2020longformer, zaheer2020bigbird}, recurrent blocks \citep{hutchins2022blockrecurrenttransformers}, and modified positional encoding \citep{press2021alibi, su2021roformer, chen2023flashattention2}, the fundamental challenge still remains: how can models reliably access and summarize distant context in a concise, cheap, yet expressive manner?

We propose a simple solution, by way of \textit{meta-tokens}, learned tokens periodically injected into the input sequence during pretraining, and cleverly placed during fine-tuning. Unlike conventional dummy tokens \citep{goyal2024think}, meta-tokens are explicitly trained via a dedicated sparse attention layer, guiding the model to condense and "cache" contextual information as an in-line storage mechanism. As a result, these tokens act as adaptive landmarks \citep{mohtashami2023randomaccess}, summarizing preceding context segments into compact representations. At inference time, meta-tokens provide implicit pathways to distant information, enabling models to generalize effectively across sequences longer than those encountered during training.

We demonstrate the empirical efficacy of this approach by pre-training a 152M parameter modified GPT-2 model with meta-tokens and a sparsely activated meta-attention mechanism. Our approach not only excels on recall-oriented synthetic tasks but also generalizes up to 2x the pretraining context window (via YaRN) — a rare feat for decoder-only architectures trained on 100B tokens or less. We trace these gains to a subtle mechanism: meta-tokens provably induce a \textit{sharpening} effect on positional encoding, enabling the meta-token to locate its position based on the content it stores and reducing the entropy of the attention distribution. We present evidence that this sharpening is responsible for an anchoring effect on relevant distant tokens, facilitating robust length generalization. Furthermore, by analyzing internal model activations and studying the rate-distortion tradeoff, we validate that meta-tokens function as compressed representations of context. 

Our contributions can be summarized as follows:

\begin{enumerate}
    \item We introduce a simple language model pre-training scheme using meta-tokens and a meta-attention mechanism to improve performance on a wide range of synthetic tasks.
    \item We show that meta-tokens \textit{sharpen} the positional encoding, enabling precise long-range attention; we further show that length generalization improves \textit{without} positional encoding.
    \item The \textit{sharpening} hypothesis and implicit compression behavior are supported by visualizations of model internals and information-theoretic analysis into the rate-distortion tradeoff. 
\end{enumerate}

\section{Preliminaries}\label{sec:prelims}

\paragraph{Causal Multi-Head Attention.} Let $\mathbf{x} = \{ x_1, x_2, \dots, x_T \}$ denote an input sequence of tokens of length $T$, $\mathcal{V}$ denote the vocabulary size of V, and $E: \mathcal{V} \rightarrow \mathbb{R}^d$ represent the  the token embedding function mapping each token to a $d$-dimensional vector. Each $x_t$ is embedded into some continuous representation where $\mathbf{e}_t = E(x_t) + \mathbf{p}_t$, such that $\mathbf{p}_t$ is the positional encoding for $t$.

In decoder-only architecture, we utilize causal self-attention to ensure that predictions for a given token are only based on preceding tokens. The causal self-attention mechanism modifies the attention computation by masking future positions in the attention weights. Formally:
\[
\text{Causal Attention} (Q, K, V) = \text{softmax} \left( \frac{QK^\top}{\sqrt{d_k}} + M \right)
\]
where $M$ masks future tokens, ensuring that the model can only attend to current and past tokens. If $A$ is the matrix of attentions scores, then
\[
A_{ij} =
    \begin{cases}
        \text{softmax}(A_{ij}) & \text{if } i \geq j \\
        0 & \text{if } i < j
    \end{cases}
\]
This masking zeros attention scores for future tokens, allowing only the relevant past tokens to influence the current token's representation.

\paragraph{Positional Encoding.} Positional encoding was introduced in Transformer pre-training to provide models with information about the ordering of tokens. With absolute positional embeddings (APE; \cite{transformers}), each position $t$ in the sequence receives a vector $p_t$, independent of its content, so tokens are distinguished in an index-by-index manner. Given learned token-embedding lookup table $E: V \rightarrow \mathbb{R}^d$ for vocabulary $V$ and hidden dimension $d$, and positional embedding $p_t = \text{Emb}_{pos}(t)$ for $t \in [0,T-1]$ and $\text{Emb}_{pos} \in \mathbb{R}^{T\times d}$. Each token embedding is then defined as $e_t = E(x_t) + p_t$; this method was used in GPT-2 and GPT-3 \citep{radford2019language, gpt-3}.

By contrast, Rotary Position Embedding (RoPE; \cite{rope}) rotates each pair of embedding dimensions by an angle proportional to position, rather than adding a separate vector per position. This makes the difference in attention scores directly encode relative distance between embeddings.
The hidden vector $h$ is split into $\frac{d}{2}$ contiguous 2-D slices, and the angle for a position $t$ is defined as $\theta_{t,i} = \frac{t}{10000^{2i/d}}$. The 2-D rotation matrix is taken as \(R(\theta) = \begin{pmatrix}\cos\theta & -\sin\theta \\ \sin\theta & \cos\theta\end{pmatrix}\). Then, $\text{RoPE}(h)_t^{(2i:2i+1)} = R(\theta_{t,i})h^{(2i:2i+1)}$. This has proven successful in the Llama models \citep{grattafiori2024llama3herdmodels}. It can be observed that RoPE reflects the relative offset $i-j$, with the dot product $\langle Q_i,K_j\rangle$ introducing a new factor of $\cos{(\frac{i-j}{10000^{2i/d}})}$. This is reflected in works using \textit{relative bias} \citep{shaw-etal-2018-self}, which introduces a bias term as a learned function over the $i-j$ distance. T5 \citep{t5} then adds this bias to $\langle Q_i,K_j\rangle$. 

\section{Training Language Models with Meta-Attention}\label{sec:meta-attention}

We introduce a set of $M$ meta-tokens (denoted as $m$); given a context length or block size of the model, $n$, we take $M = kn$ for some constant fraction $k \in [0,1]$\footnote{We take $k = 0.1$ in practice; balancing next-token prediction over the standard vocabulary while injecting a non-trivial number of meta-tokens.}. The aim of introducing these meta-tokens is to capture or store contextual information to enhance the model's retrieval and reasoning capabilities; attending to a meta-token should enable implicit retrieval of the context that it stores, guiding shortcut paths over the context window. In practice, these may be treated akin to adding a filler token to the model's vocabulary. 

The $M$ tokens are injected into the input sequences during pre-training uniformly at random, which was informed by two key premises. While we desire interpretability and control in applying these tokens, and as a result, prefer distinguishability at the task level, this is challenging to do without explicitly fixing a downstream task, impeding generality. The second consideration was in how they specifically they should be injected. While \cite{zelikman2024quietstar} introduced <|startofthought|> and <|endofthought|> tokens interleaved between reasoning steps near punctuation (serving as natural break), the introduction of a rough periodicity between tokens during pre-training could result in being trapped into local minima in the optimization landscape. We instead chose to follow the random injection scheme, supported by the meta-token pre-training approach outlined in \cite{goyal2024think}. 

We ensure that the trained model incurs no loss for predicting meta-tokens, unlike a standard token in the vocabulary -- the meta-tokens' indices are simply shifted and removed when computing the binary cross-entropy (BCE) loss. 

\paragraph{Meta-Attention Mechanism.}

We augment our transformer $H$ to take $P$  which contains the positions of the meta-tokens. We introduce a sparse attention mechanism, called meta-attention, which selectively modifies attention scores for the specially marked "meta-tokens" within a sequence. This allows the model to simulate selective attention, influencing the final behavior by focusing on these meta-tokens. 
The underlying principles of the desired behavior is influenced by dual cross-attention \citep{lemevit}, such that operations are performed higher on the abstraction hierarchy than the feature space alone. This induces a meta-learning-like setup over which attention on the meta-tokens is learned. 

Let the indices of special "meta-tokens" be denoted by $\text{positions} \in \mathbb{R}^{B \times T'}$, where $T'$ is the number of meta tokens in a batch. We construct a meta mask $P \in \mathbb{R}^{B \times T \times T}$ to influence the attention mechanism. For each batch element $b$ and token positions $i$, $j$:

\[
P[b, i, j] =
\begin{cases}
0 & \text{if both } i \text{ and } j \text{ are meta tokens (i.e., } i, j \in \text{positions}[b, :]) \\
-\infty & \text{otherwise}
\end{cases}
\]

The meta-attention operation is defined as:
\[
\text{MetaAttention}(Q, K, V) = \text{softmax} \left( \left( \frac{QK^\top}{\sqrt{d_k}} + M\right) + P \right) V
\]
Where M is the same causal mask as before.
Here, the meta mask $P$ allows attention to flow only among the meta tokens in the sequence, introducing a distinct interaction compared to regular attention. This meta-attention layer selectively modifies the attention by influencing the flow of information to and from these meta tokens, distinguishing itself from the standard causal attention. 

In particular, if $A$ is the matrix of attentions scores then
\[
A_{ij} =
    \begin{cases}
        \text{softmax}(A_{ij}) & \text{if i and j are meta tokens} \\
        -\infty & \text{otherwise}
    \end{cases}
\]

To assemble the architecture used for our model, we insert the meta-attention mechanism after the causal masked self-attention computation, to specifically attend to the injected meta tokens, as defined above. We provide a complete breakdown of the architecture in Appendix \ref{full-architecture}.
\section{Results}\label{sec:results}

\subsection{Model Training and Architecture}

All experiments were performed with 4 NVIDIA A100 GPUs, training the meta attention transformer for 200,000 iterations or 98B tokens using Distributed Data Parallel (DDP) on the Colossal Cleaned Crawl Corpus (C4) \citep{t5}. The configuration and hyperparameters used in our pre-training are included in Appendix \ref{full-architecture} and \ref{hyperparams}. As a baseline, we also pre-train GPT-2 (124M) on C4, with identical hyperparameters. The primary change we make from a standard GPT-2 architecture is the addition of RoPE to enable better generalization to longer contexts and improve stability in next-token prediction tasks.

We extend our transformer model's context window from 1024 tokens to longer sequences by training two distinct models with context lengths of 4096 and 8192 tokens, respectively. This extension is implemented using the YaRN method \citep{peng2023yarnefficientcontextwindow}, which dynamically scales Rotary Positional Embeddings (RoPE) to effectively process significantly longer sequences without compromising performance or computational efficiency. The key parameters are detailed in Appendix \ref{yarn}

\subsection{Experimental Setup and Tasks}
We design four synthetic tasks to evaluate the recall capabilities of models trained with meta-tokens. The tasks are \emph{List Recall}, \emph{Segment Counting}, \emph{Parity}, and \emph{Copying}. For each task, we define three difficulty levels by varying the maximum sequence length. In all tasks, we insert a designated \texttt{\_PAUSE\_} meta-token at task-specific positions to indicate where the model should focus its meta-attention.  
We fine-tune on synthetic data that we generate for each task (binned by instance length) and report the validation score on a held-out test set.  Detailed examples for each task are provided in Appendix \ref{examples}.

\begin{itemize}
    \item \textbf{List Recall:} Given $N$ named lists of length $k$, the model is prompted to recall a specific item from a specified list.  We insert a \texttt{\_PAUSE\_} meta-token immediately following the list containing the queried item, as well as before the final question. The expected answer is the corresponding item. Task difficulty is scaled by varying the list length $k$ and number of lists $N$.

    \item \textbf{Segment Counting:} The model is presented with several named lists, with a segment in these lists wrapped by by \texttt{\_PAUSE\_} meta-tokens. The prompt then asks how many times a specified item appears between the two meta-tokens. The task difficulty changes based on the number and size of the named lists.
    
\item \textbf{Parity}: In this task, the input consists of a sequence of bits, with a \texttt{\_PAUSE\_} meta-token indicating a specific position in the sequence. The model is prompted to compute the XOR of all bits appearing before the meta-token.
The task difficulty changes based on the number of bits it has to XOR.

\item \textbf{Copying}  The model is given with a segment of text containing a bracketed spans marked by \texttt{\_PAUSE\_} meta-tokens. The model is prompted to reproduce the exact content found between the meta-token-marked boundaries. The task is designed to assess the model’s ability to extract and copy arbitrary spans of text from a context, with difficulty varying according to the length and complexity of the bracketed span. We report the sequence accuracy.

\end{itemize}

Within these four tasks, we investigate length generalization by fine-tuning our model in multiple phases. At each phase, we assess the model’s performance on sequence lengths exceeding those seen during that phase’s training, enabling us to  evaluate its generalization to longer contexts. In addition, Appendix~\ref{appendix:further-expt-details} reports the performance of our models on a context length of 2048 tokens, which is twice the length seen during pretraining (1024 tokens).

\paragraph{Baselines.} For a controlled comparison, we also pre-train a GPT-2 model (NanoGPT, 124M; \cite{karpathy2023nanogpt}) on C4, with identical hyperparameters as the meta-tokens model. Additionally, we use Eleuther AI's GPT-Neo-125M  \citep{gpt-neo} as another baseline.

\subsection{Meta-Tokens Improve Performance on Synthetic Recall-Oriented Tasks.}

As seen in Figure \ref{fig:figure-1}, we find that the models trained on meta-tokens substantially outperform our pre-trained GPT-2 and GPT-Neo-125M baselines, across all tasks and all train lengths. The complete tables for these results are included in Appendix \ref{appendix:full-tables}. We observe the GPT-2 model trained with APE to generally perform poorly; however, it does achieve reasonable performance in the segment counting and parity tasks, albeit much further behind the models with meta-attention. This suggests that training on further data could improve its performance; this also highlights the data-efficiency of our meta-tokens models. The models also gain in performance much more quickly with fine-tuning when increasing the train length -- a phenomenon not observed with the GPT-2 models. Our models also outperform the GPT-Neo-125M model by a substantial margin; given that GPT-Neo was pre-trained on 300B tokens, nearly three times the volume of data on which our meta-attention models were trained (albeit from a different corpus). 

To study the effect of positional encoding on our results, we ablate by zeroing out the positional encoding, zeroing out the text embedding, and performing both operations -- all just at the meta-token indices. Curiously, we observe in Tables \ref{tab:ablation_results}-\ref{tab:ablation_copy} that the score \textit{without} positional encoding nearly matches or exceeds the accuracy of the model \textit{with} the positional encoding as is. The lone exception is the segment counting task, where there is a gap for all settings except the model trained with APE at a length of 256, which achieves a $+4.8\%$ improvement over the "Full" model. By contrast, zeroing out the token embedding hurts performance substantially in nearly every setting on List Recall, Segment Counting, and Copying; on Parity, this generally matches the performance of zeroing out the positional encoding. Thus, we find that 1. pre-training with meta-tokens and meta-attention boosts performance, and 2. zeroing out the positional encoding at just the meta-tokens can match or improve performance at inference time.

\begin{figure}[t]
  \centering
\begin{minipage}[t]{0.50\textwidth}
\centering\includegraphics[width=\textwidth]{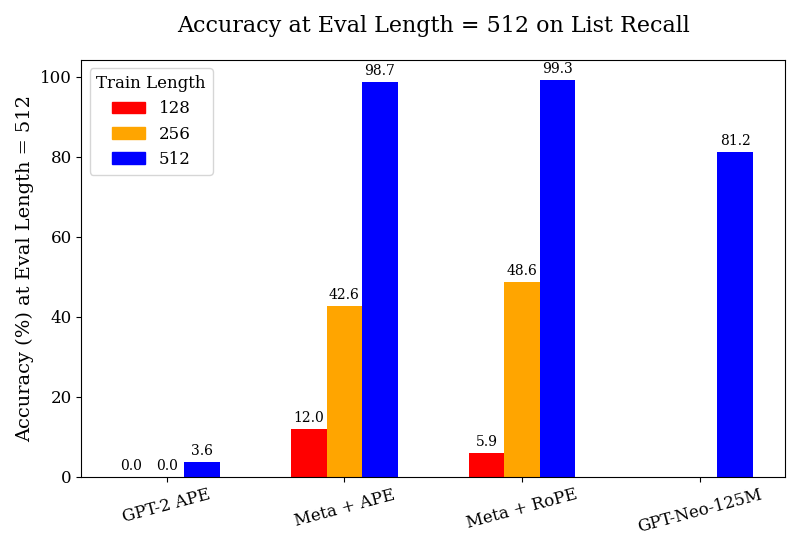}
  \end{minipage}\hfill
  \begin{minipage}[t]{0.50\textwidth}
\includegraphics[width=\textwidth]{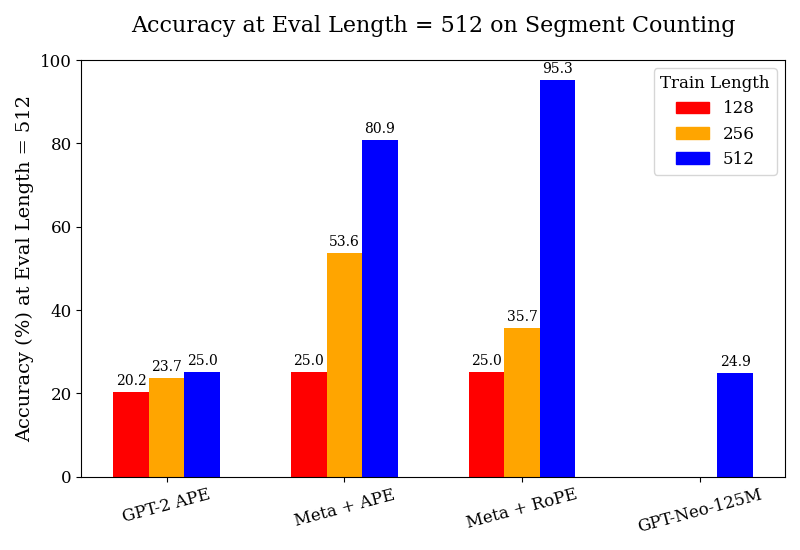}
  \end{minipage}\hfill
\end{figure}

\begin{figure}[t]
  \centering
\begin{minipage}[t]{0.50\textwidth}
\centering\includegraphics[width=\textwidth]{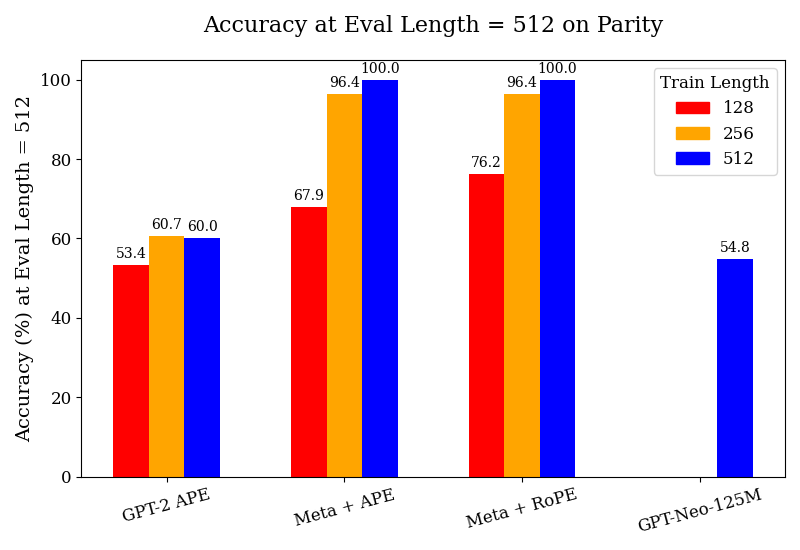}
  \end{minipage}\hfill
  \begin{minipage}[t]{0.50\textwidth}
\includegraphics[width=\textwidth]{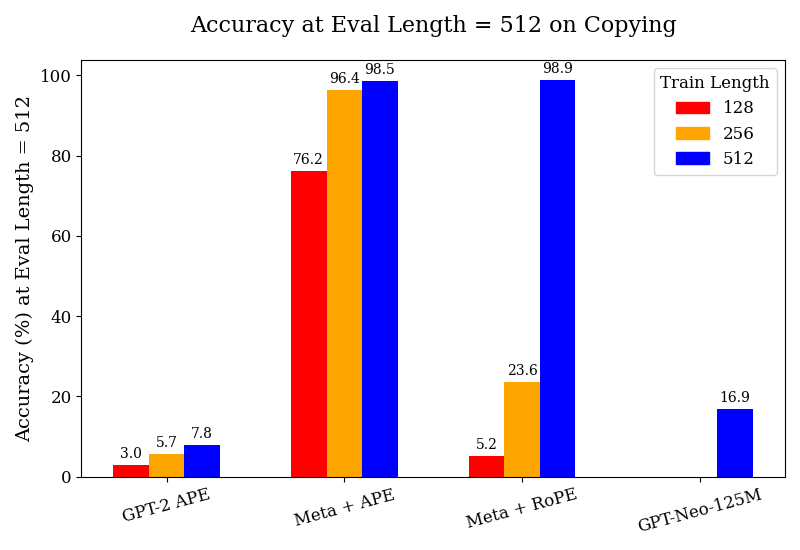}
  \end{minipage}\hfill

  \caption{%
     We study the performance of the pre-trained GPT-2 w/ APE, Meta-attention w/ APE, and Meta-attention w/ RoPE, as well as GPT-Neo-125M, all fine-tuned on synthetic data for their respective tasks at the maximum train lengths indicated in the legends. All experiments are performed on a test set of prompt lengths up to 512 tokens. 
  }
  \label{fig:figure-1}

\end{figure}

\begin{table*}[t]
\setlength{\dbltextfloatsep}{4pt plus 1pt minus 1pt}
\setlength{\textfloatsep}{4pt plus 1pt minus 1pt}
\centering
\footnotesize
\setlength{\tabcolsep}{4pt} 
\renewcommand{\arraystretch}{1.1}
\caption{Token Accuracy (\%) on List Recall and Segment Counting across long contexts.}
\label{table-2-long-context}
\begin{tabular}{@{}l@{\hskip 4pt}l@{\hskip 4pt}|*{12}{c}@{}}
\toprule
\textbf{Task} & \textbf{Train/Finetune} & \textbf{2k} & \textbf{3k} & \textbf{4k} & \textbf{5k} & \textbf{6k} & \textbf{7k} & \textbf{8k} & \textbf{10k} & \textbf{12k} & \textbf{14k} & \textbf{16k} \\
\midrule
\multirow{4}{*}{List} 
& 4k / 2k & 19.5 & 16.0 & 13.7 & 0.9 & 0.0 & 0.0 & 0.9 & 1.1 & 0.0 & 2.1 & 1.1 \\
& 4k / 4k & 85.0 & 88.2 & 90.2 & 20.5 & 1.8 & 1.0 & 3.5 & 4.4 & 1.1 & 2.1 & 2.1 \\
& 8k / 4k & 85.0 & 95.8 & 91.2 & 97.4 & 98.2 & 96.2 & 93.9 & 31.9 & 0.0 & 2.1 & 2.1 \\
& 8k / 8k & 92.9 & 98.3 & 97.1 & 100.0 & 98.2 & 100.0 & 100.0 & 89.0 & 26.1 & 10.4 & 9.6 \\
\midrule
\multirow{4}{*}{Count} 
& 4k / 2k & 19.1 & 23.8 & 19.2 & 14.6 & 25.2 & 14.1 & 14.0 & 12.0 & 16.0 & 8.0 & 6.0 \\
& 4k / 4k & 17.5 & 23.8 & 31.8 & 20.3 & 30.4 & 19.3 & 19.1 & 14.0 & 26.0 & 12.0 & 16.0 \\
& 8k / 4k & 19.1 & 23.8 & 14.3 & 11.1 & 20.6 & 12.7 & 12.7 & 14.0 & 16.0 & 14.0 & 12.0 \\
& 8k / 8k & 27.0 & 33.3 & 15.9 & 19.1 & 27.0 & 19.1 & 23.8 & 22.0 & 18.0 & 18.0 & 18.0 \\
\bottomrule
\end{tabular}
\end{table*}

\paragraph{Meta-Tokens Aid in Length Generalization.}

In Figure \ref{fig:figure-1} and Appendix \ref{appendix:full-tables}, we find that the model trained on meta-tokens length generalizes well on the parity and copying tasks with APE, and performs somewhat well (much better than the baselines) on list recall and segment counting at a train length of 256. For instance, despite relatively similar performance at the 128 train length on the segment counting task, the performance on the test set of up to a length of 512 dramatically increases when training at the 256 length, by $+28.6\%$ with APE and $+10.7\%$ with RoPE, compared to $+3.5\%$ for GPT-2 with APE. Table \ref{table-2-long-context} exhibits a similar trend for the YaRN models, achieving strong performance across its respective context windows, and even achieves non-trivial accuracy beyond the window. Fine-tuning the 8k YaRN model on examples of up to a length of 4k can generalize very well up to 8k. These findings underscore the substantial advantages of training with meta-tokens and the nuanced role positional encoding plays in task-specific and length-generalization contexts. 

Moreover, when looking at the results on Meta + RoPe on test set lengths of prompts up to 1024 tokens (denoted extra-hard in Table \ref{tab:list_recall_no_pos_improvements}), we find that zeroing out the positional encoding also plays a sizable role in improving length generalization, especially in the List Recall task. While the model originally achieves performances of $11.1\%$, $0\%$ and $44.4\%$ when fine-tuned on train lengths of 512 (APE), 256 and 512 (RoPE), respectively, the scores improve by $+38.9\%$, $+22.2\%$ and $+11.2\%$, by simply zeroing out the positional encoding at the meta-tokens. 

\subsection{Examination of Positional Encoding through Internal Representations.}

\begin{table}[t]
\centering
\caption{The configurations where zeroing the positional encoding at inference time results in accuracy improvements on the List Pointer task, denoted by the $\Delta$(pp) percentage points column.}
\label{tab:list_recall_no_pos_improvements}
\begin{tabular}{lccc}
\toprule
\textbf{Model (Split, Train Len)} & \textbf{Full} & \textbf{No Pos} & \textbf{$\Delta$(pp)} \\
\midrule
Meta + APE (medium, 128)    & 77.8\% & 88.9\% & +11.1 \\
Meta + APE (hard, 128)      & 11.1\% & 22.2\% & +11.1 \\

Meta + APE (extra-hard, 512)& 11.1\% & 50.0\% & +38.9 \\

\addlinespace
Meta + RoPE (medium, 128)           & 44.4\% & 55.6\% & +11.1 \\
Meta + RoPE (hard, 256) & 33.3\% & 66.7\% & +33.3 \\
Meta + RoPE (extra-hard, 256)        &  0.0\% & 22.2\% & +22.2 \\

Meta + RoPE (extra-hard, 512)        & 44.4\% & 55.6\% & +11.1 \\
\bottomrule
\end{tabular}
\end{table}

As discussed above, the results in Tables \ref{tab:ablation_results}-\ref{tab:ablation_copy} suggest that the positional encoding of the meta-token can potentially be holding back the downstream performance of the meta-attention models. We posit that the model is instead relying on its content -- cached context stored within the meta-token -- to \textit{sharpen} its sense of its position in the sequence. 

Next, we aim to formally define this notion of sharpness in the context of positional encoding, and its relationship to the model's logits. Let $\alpha_{i\rightarrow k} = \text{softmax}_k(Q_iK_j^T + b_{i-j})$ be the attention distribution for query $i$ over keys $j$, with relative bias term $b_{i-j}$. 
We define the \textit{sharpness} of the positional encoding by the entropy:
\[H(\alpha_i) = -\sum_j \alpha_{i\rightarrow j} \log \alpha_{i \rightarrow j}\]

Intuitively, when a meta-token is present at position $t$, the model's attention becomes peaked around a small set of keys; this "honing in" behavior reduces $H(\alpha)$ compared to APE or RoPE without meta-tokens. In this manner, meta-tokens behave as \textbf{content-driven landmarks} -- they serve as a low-entropy channel that serves as a pointer to relevant context. As noted prior, the data efficiency observation suggests that the meta-token helps to accelerate next-token prediction behavior while introducing a stabilizing effect in the midst of noisy positional encoding. 

\begin{theorem}\label{thrm:logits-entropy}
    Consider a Transformer head at query position $i$ over keys $1, \dots, N$. Let $\alpha_i^{\text{abs}}(j) \propto \exp(Q_iK_j^T)$ be the attention under  absolute positional encoding and let $\alpha_i^{\text{meta}} \propto \exp(Q_iK_j^T + \delta_{j,j^*}\Delta)$ when a meta-token at position $j^*$ introduces an additive logit boost of $\Delta > 0$. Then, for some function 
    $\kappa(\Delta) > 0$: 
    \begin{align}H(\alpha_i^{\text{meta}}) \leq H(\alpha_i^{\text{abs}}) - \kappa(\Delta)
    \end{align}
\end{theorem}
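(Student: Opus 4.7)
The plan is to treat the boost size as a continuous parameter and analyze the entropy functional along the resulting path of distributions. Define a one-parameter family of softmax distributions $p^{(t)}_j \propto \exp(Q_iK_j^T + t\,\delta_{j,j^*})$, so that $p^{(0)} = \alpha_i^{\text{abs}}$ and $p^{(\Delta)} = \alpha_i^{\text{meta}}$. Writing $\phi(t) := H(p^{(t)})$, the theorem reduces to showing $\phi(\Delta) \leq \phi(0) - \kappa(\Delta)$ for some positive $\kappa$.

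The heart of the argument is an explicit derivative. Using the softmax identity $\partial_t p^{(t)}_j = p^{(t)}_j(\delta_{j,j^*} - p^{(t)}_{j^*})$ and differentiating $H(p^{(t)}) = -\sum_j p^{(t)}_j \log p^{(t)}_j$ termwise, one obtains
\[
\phi'(t) \;=\; -\,p^{(t)}_{j^*}\bigl[\log p^{(t)}_{j^*} + H(p^{(t)})\bigr].
\]
The bracketed quantity is nonnegative exactly when $p^{(t)}_{j^*} \geq e^{-H(p^{(t)})}$, i.e., when the boosted token's probability already sits above the geometric-mean threshold of the distribution. Because $p^{(t)}_{j^*}$ is strictly increasing in $t$ --- it is the mean of the sufficient statistic $\delta_{\cdot,j^*}$ in the natural exponential family indexed by $t$, whose derivative equals the positive variance $p^{(t)}_{j^*}(1 - p^{(t)}_{j^*})$ --- this threshold is crossed at some $t^\star \geq 0$, after which $\phi$ is strictly decreasing. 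Integrating $\phi'$ from $t^\star$ to $\Delta$ and using the limits $p^{(t)}_{j^*} \to 1$ and $H(p^{(t)}) \to 0$ as $t \to \infty$ yields an explicit positive lower bound on the total drop, which we define to be $\kappa(\Delta)$.

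The main obstacle is the sign of $\phi'(0)$: if the meta-token begins with base probability \emph{below} $e^{-H(p^{(0)})}$, then pushing on it first moves the distribution \emph{toward} uniform and the entropy briefly rises before falling. To extract a clean $\Delta$-only bound I would either (i) adopt the mild side assumption $p^{(0)}_{j^*} \geq e^{-H(p^{(0)})}$, making $\phi$ monotone from the outset and yielding a first-order estimate $\kappa(\Delta) \geq c_0 \Delta$ with $c_0 = p^{(0)}_{j^*}\bigl[\log p^{(0)}_{j^*} + H(p^{(0)})\bigr] > 0$; or (ii) allow $\kappa$ to depend implicitly on the base logits and argue positivity of the net drop from the asymptotics $H(p^{(\Delta)}) \to 0$, choosing $\Delta$ large enough to swamp any initial uptick. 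Option (i) is cleaner and matches the intended picture --- the meta-token is designed to aggregate preceding context into a peaked attention score, so $p^{(0)}_{j^*}$ is naturally in the regime where $\phi$ decreases monotonically.
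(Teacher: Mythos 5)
Your derivative formula is algebraically identical to the paper's: the paper writes $\frac{d}{dt}H(\alpha) = -\mathrm{Cov}_\alpha(\ell',\ln\alpha) = -\alpha_{j^*}\bigl(\ln\alpha_{j^*} - \mathbb{E}_\alpha[\ln\alpha]\bigr)$, and since $\mathbb{E}_\alpha[\ln\alpha]=-H(\alpha)$ this is precisely your $\phi'(t)=-p^{(t)}_{j^*}\bigl[\log p^{(t)}_{j^*}+H(p^{(t)})\bigr]$. So the path parametrization and the core computation coincide exactly; you and the paper take the same route.

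Where you diverge is in how the sign of $\phi'$ is handled, and your caution here is warranted. The paper asserts that $\ln\alpha_{j^*}=\max_j\ln\alpha_j$ ``due to the Schur-concavity of $H$,'' but Schur-concavity says nothing about which coordinate is the largest; that claim requires the separate hypothesis that $j^*$ carries the maximum logit. That hypothesis \emph{is} stated explicitly in the paper's supporting Lemma (Lemma~\ref{lemma:E.1} assumes $\ell_{j^*}=L$ and $\ell_j\le L-\Delta$ for $j\ne j^*$), but the proof of the theorem itself never invokes it, so as written the sign argument has a hole. You have identified exactly the right obstruction --- $\phi'$ can be positive when $p^{(0)}_{j^*}<e^{-H(p^{(0)})}$ --- and your option~(i), assuming $p^{(0)}_{j^*}\ge e^{-H(p^{(0)})}$ (equivalently, that the boosted index is already the modal one), is precisely the side condition the paper's Lemma~\ref{lemma:E.1} supplies. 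Under that condition $p^{(t)}_{j^*}$ stays maximal for all $t\ge 0$ (it is increasing, and its competitors' masses only shrink), so $\phi'<0$ on the whole path, and your first-order estimate $\kappa(\Delta)\ge c_0\Delta$ with $c_0=p^{(0)}_{j^*}[\log p^{(0)}_{j^*}+H(p^{(0)})]$ gives an explicit drop --- arguably cleaner than the paper's, which stops at strict monotonicity without producing a quantitative $\kappa$. Your option~(ii), waiting for a large-$\Delta$ crossover, does not actually yield a $\kappa$ depending only on $\Delta$ (the crossover point $t^\star$ depends on the base logits), so if you want the bound in the stated form you should commit to option~(i) and make the maximal-logit hypothesis explicit in the theorem statement, as the paper's lemma does.
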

\begin{proof}[Proof Sketch.]
    Parametrize the path by $t \in [0,\Delta]$, and define logits $\ell_j^{(t)}$ and their softmax $\alpha^{(t)}$ respectively. Since boosting the true index tightens the margin, the derivative of $H(\alpha^{(t)}$ is strictly negative. Therefore, over the path, $H(\alpha^{(\Delta)}) < H(\alpha^{(0)})$, so $H(\alpha^{\text{meta}}) < H(\alpha^{\text{abs}})$, where their difference must be a function of $\Delta$. The full proof is included in Appendix \ref{appendix:theory}. 
\end{proof}

We note that this theorem also applies to RoPE, using $\alpha_i^{\text{RoPE}}(j) \propto \exp{Q_i(\text{RoPE}(K_j))^T}$. 
A natural consequence of Theorem \ref{thrm:logits-entropy} is that the meta-token operates as an "anchor" from a logits standpoint by creating a margin $\Delta$ that concentrates the softmax. Concretely, we can specify that for meta-token $m_t$ at position $t$ and embedding $e_t \in \mathbb{R}^d$, and query at position $i$ with vector $Q_i$, has contribution to the $(i,j)$ logit of $\Delta_{i,j}^{(t)} = Q_i \cdot We_t \times \textbf{1}_{j=t}$ for learned linear head $W$. Summing over $t$ yields the bias matrix $B \in \mathcal{B}_{\text{meta}}$, the set of all realizable bias matrices under the meta-token embeddings. Thus, any learned meta-token embedding -- provided that it adds to the logits at the summary position $j^*$ -- guarantees sharper attention by reducing that attention head's entropy. 

\begin{figure}[t]
  \centering
\begin{minipage}[t]{0.60\textwidth}
\centering\includegraphics[width=\textwidth, height=3.75cm]{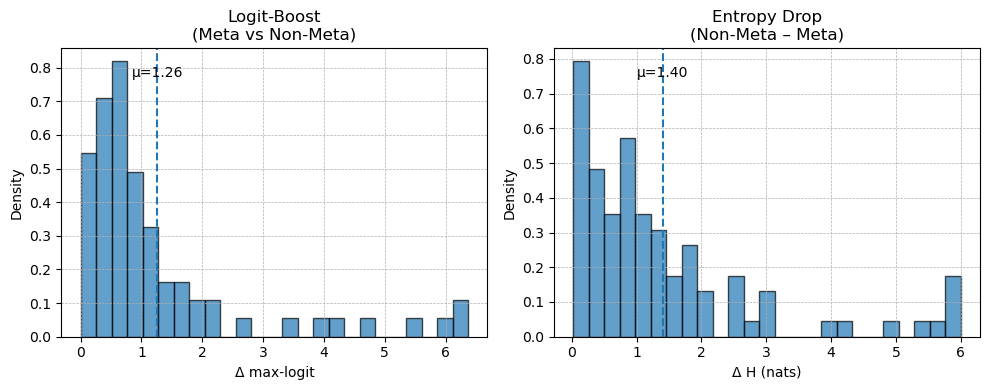}
  \end{minipage}\hfill
  \begin{minipage}[t]{0.40\textwidth}
\includegraphics[width=\textwidth, height=3.5cm]{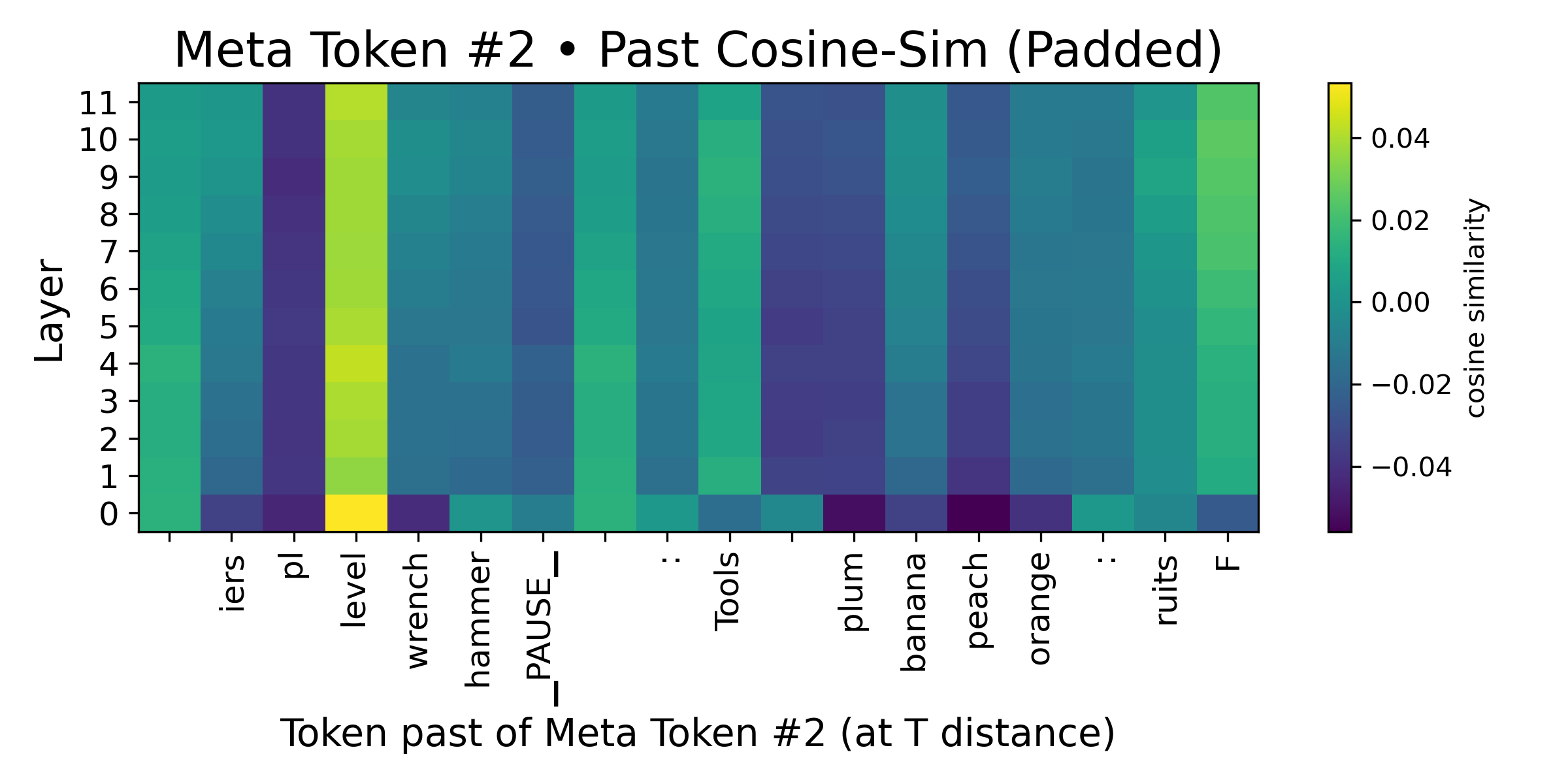}
  \end{minipage}\hfill

  \caption{%
     (Left) We analyze the change in logits at the meta-token position, and observe that the meta-tokens do indeed induce a sizable boost in logits compare to zeroing its token embedding. (Middle) We find the boost in logits to correspond with a meaningful reduction in Shannon entropy over the softmax of the logits between the zeroed meta-token sequence and the sequence with the meta-token as is. This corroborates with our assumptions and claims in Theorem 4.1. (Right) We study the implicit "caching" ability of the meta-token by studying the cosine similarity over the token embeddings. We observe high spikes (the yellow column), diminishing as we move further away. This substantiates our claims of the presence of an implicit compression and "caching" mechanism.
  }
  \label{fig:figure-2}

\end{figure}

In Figure \ref{fig:figure-2}, we analyze the logits, comparing two settings: (1.) the current meta-token and (2.) the meta-token with its token embedding zeroed out. We find that the former gains a sizable amount over the latter, reinforcing the assumption made in Theorem 4.1 that the meta-token introduces an additive logit boost of $\Delta > 0$. Our empirical results show that the entropy over the softmax distribution of the logits decreases (the difference between "non-meta-token" and "meta-token" is positive), thus corroborating our central claim in Theorem 4.1. 

\subsection{Inference Efficiency}
Meta-tokens are generated at inference. The additional computation is sparse—each attention head only considers a small number of meta positions rather than the full attention matrix. In our current PyTorch implementation, which materializes the sparse mask as a dense tensor, we observe a throughput drop from 130.82 to 117.86 tokens/sec and a TTFT increase from 7.44ms to 7.57ms, i.e., a 1.11× slowdown. We expect optimized sparse attention implementations to reduce or eliminate this overhead.

\begin{table}[h]
\centering
\caption{Inference speed comparison with and without meta/pause tokens.}
\begin{tabular}{lrr}
\toprule
Metric & No meta/pause tokens & With meta/pause tokens \\
\midrule
TPS (tokens/sec) & 130.82 & 117.86 \\
TTFT (ms) & 7.44 & 7.57 \\
Slowdown factor & 1.00 & 1.11 \\
\bottomrule
\end{tabular}
\label{tab:inference-overhead}
\end{table}
\section{Examining Context Compression with Rate-Distortion Theory}\label{sec:rate-distortion}

Given that these results provide evidence that meta-tokens can compress context in their representation, we develop mathematical formalizations to analyze this behavior. In particular, we turn to information-theoretic tools -- specifically, an information bottleneck view.

For a meta-token at $x_m$ succeeding a sequence of tokens $X = x_{i:m-1}$ from indices $i$ to $m-1$, we consider a compression function $\zeta(\cdot)$ which transforms the subsequence $X$ into $x_m$.
As such, we define $\hat{X} = \zeta(X) = \zeta(x_{i:{m-1}})$ to be the \textit{compressed representation stored} in $x_m$. This can be generalized to the full set of $M$ meta-tokens:
\[\hat{X}_{1:M} = [\zeta_1(X_{1:m_1-1}), \zeta_2(X_{m_1+1:m_2-1}), \dots \zeta_M({m_{M+1}:m_{n}})]\]

For practicality, we consider the variational information bottleneck \citep{alemi2017deep}. This introduces an encoder $q_{\phi}(\hat{x}\mid x)$ and decoder $q_{\theta}(y \mid \hat{x})$, along with a simple prior $r(z)$ (e.g. $N(0,1)$), yielding the following form to solve for these variational distributions:

\[\min_{q_{\phi}, q_{\theta}} 
 \mathop{\mathbb{E}}_{p(x,y)}[\mathop{\mathbb{E}}_{q_{\phi}(\hat{x}\mid x)}[- \log q_{\theta}(y \mid \hat{x}]] + \beta \cdot \mathop{\mathbb{E}}_{p(x)}[KL(q_{\phi}(\hat{x} \mid x) || r(x))]\]

 This form admits an equivalent perspective in rate-distortion theory. Specifically, the first term measures the quality in predicting the downstream target given a lossy compression $\hat{X}$ ("distortion"). The second term measures the average number of bits required to encode $\hat{X}$, relative to some simple reference code $r(z)$ ("rate").  As such, analyzing rate-distortion curves -- sweeping over values of $\beta$ -- can provide valuable insights into the quality of the "compression" behavior and its informativeness when the meta-token is attended to.

\begin{theorem}
\label{rd-theorem}
Let $D_{\text{abs}}(R)$ be the minimum distortion achievable at rate $R$ under the VIB objective only using absolute positional encoding (no meta-tokens), and let $D_{\text{meta}}(R)$ be the minimum distortion achievable at rate $R$ with meta-tokens. Then, for every $R \geq 0$,
\begin{align}
    D_{\text{meta}}(R) \leq D_{\text{abs}}(R)
\end{align}
\end{theorem}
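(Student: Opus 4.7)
The plan is to prove $D_{\text{meta}}(R) \leq D_{\text{abs}}(R)$ by a hypothesis-class containment argument: the meta-token architecture can realize every encoder--decoder pair that the absolute-PE-only architecture can, so the infimum of distortion subject to a rate constraint can only weakly decrease when one enlarges the feasible set to include meta-tokens.

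First, I would formalize $D_\bullet(R)$ as the constrained optimum
\[
D_\bullet(R) = \inf_{(q_\phi, q_\theta) \in \mathcal{Q}_\bullet,\, \mathbb{E}_{p(x)}[\mathrm{KL}(q_\phi(\hat{x}\mid x)\,\|\,r(\hat{x}))] \leq R} \mathbb{E}_{p(x,y)}\mathbb{E}_{q_\phi(\hat{x}\mid x)}[-\log q_\theta(y\mid \hat{x})],
\]
where $\mathcal{Q}_\bullet$ denotes the set of encoder--decoder pairs realizable by the corresponding architecture, with the VIB Lagrangian parameter $\beta$ tracing this curve. Next, I would construct an explicit inclusion $\mathcal{Q}_{\text{abs}} \hookrightarrow \mathcal{Q}_{\text{meta}}$ by a ``null meta-token'' construction: given any $q_\phi^{\text{abs}} \in \mathcal{Q}_{\text{abs}}$, the meta-token model can simulate it by taking the meta-token embeddings (equivalently, the value projection of the meta-attention branch) to vanish, so that the meta-attention output is identically zero and the residual stream reduces to the standard causal-attention pathway with absolute positional encoding. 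Under this choice, each compression map $\zeta_m$ collapses to the identity on the underlying subsequence, so the induced $\hat{X}$, its mutual information with $X$, and the decoder distribution $q_\theta(y\mid \hat{x})$ coincide exactly with those of the absolute-PE-only model; both the rate and the distortion are preserved.

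Finally, I would conclude by monotonicity of the infimum over a larger feasible set: for every $R \geq 0$, the infimum over $\mathcal{Q}_{\text{meta}}$ is at most the infimum over the embedded copy of $\mathcal{Q}_{\text{abs}}$, yielding $D_{\text{meta}}(R) \leq D_{\text{abs}}(R)$. The hard part will be making the ``null meta-token'' construction rigorous against the precise architectural definition: one must verify that zeroing the meta-attention branch does not interact adversely with LayerNorms or residual couplings shared with the causal pathway, and that the resulting encoder genuinely lies in $\mathcal{Q}_{\text{meta}}$ rather than merely in its closure. Should a strict inequality be desired on a non-trivial regime, I would combine this containment with Theorem~\ref{thrm:logits-entropy}: the sharpening effect reduces the effective conditional entropy $H(Y\mid \hat{X})$ at a fixed mutual-information budget, strengthening the weak inequality wherever meta-attention is active.
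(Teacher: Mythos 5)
Your proof takes essentially the same route as the paper: establish the containment $\mathcal{Q}_{\text{abs}} \subseteq \mathcal{Q}_{\text{meta}}$ by zeroing out the meta-token contributions, then conclude by monotonicity of the constrained infimum over a larger feasible set. Your version is slightly more careful than the paper's in flagging that one must verify the null-meta-token construction actually produces a well-defined member of $\mathcal{Q}_{\text{meta}}$ (LayerNorm and residual interactions), a technicality the paper's proof glosses over.
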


Intuitively, meta-tokens expand the feasible set of encoders and decoders, which will either match or lower distortion for a given rate. Thus, the quality of compression with respect to its informativeness in predicting the target can only improve. 

\begin{figure}[t]
  \centering
\begin{minipage}[t]{0.40\textwidth}
\centering\includegraphics[width=\textwidth]{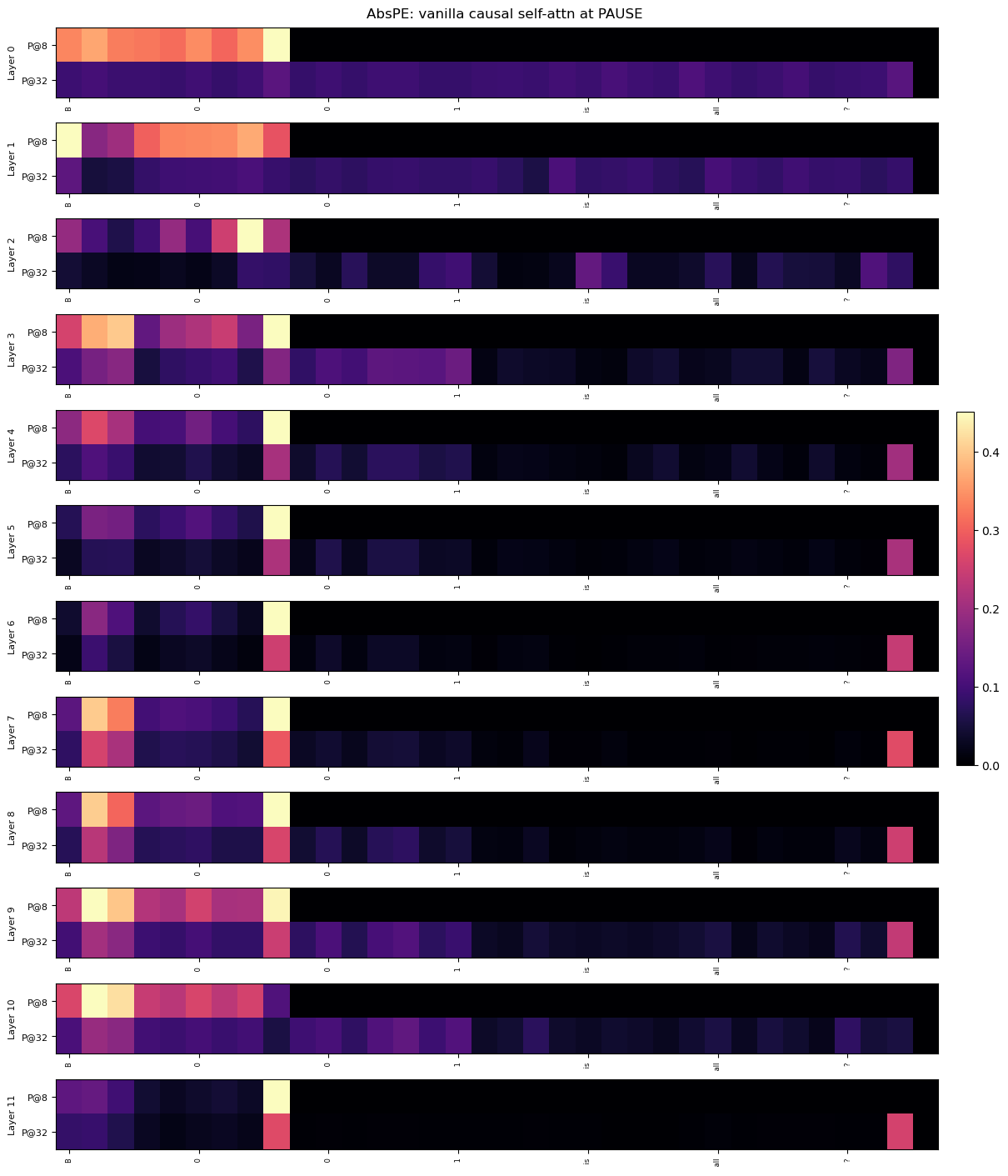}
  \end{minipage}\hfill
  \begin{minipage}[t]{0.58\textwidth}
\includegraphics[width=\textwidth]{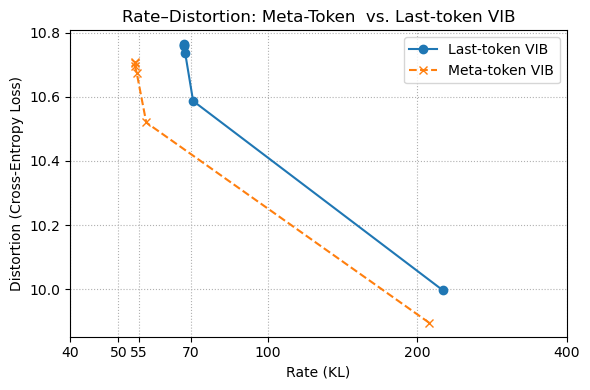}
  \end{minipage}\hfill

  \caption{%
     (Left) This plot visualizes the residual stream after each layer, to analyze the meta-token within causal attention. The colors before the meta-token (the colored band across the layers) denote the context which the meta-token attends to and implicitly stores, and the final, rightmost colored line represents the final meta-token in the sequence, which attends to the previous one at the aforementioned band. (Right) We analyze the variational information bottleneck (VIB) objective and its decomposition into its rate and distortion components. Supporting the findings of Theorem 5.1, for a given rate $R$, the distortion $D$ is strictly lower for the meta-token compared to the last non-meta-token element in the sequence.
  }
  \label{fig:figure-3}

\end{figure}

\subsection{Rate-Distortion Informs the Quality of Context Caching}

To obtain empirical rate–distortion curves for our meta‐token bottleneck in Figure \ref{fig:figure-3}, we freeze the pre-trained meta-token model and fix a small variational bottleneck head to the \emph{last} meta-token hidden state.  Concretely, let 
$ h_m\in \mathbb{R}^D $
be the output of the final Transformer layer at the last meta-token position.  We introduce
\[
q_\phi(z\mid h_m)=\mathcal{N}\bigl(\mu_\phi(h_m),\,\mathrm{diag}(\sigma^2_\phi(h_m))\bigr),
\quad
q_\theta(y\mid z)=\mathrm{softmax}(Wz + b),
\]
with $\mu_\phi,\sigma_\phi: \mathbb{R} ^D\to \mathbb{R} ^L$ and $W\in \mathbb{R}^{|\mathcal{V}|\times L}$.  We then optimize the ELBO:
\[
\min_{\phi,\theta}\;
\mathbb{E}_{h_m,y}\bigl[-\log q_\theta(y\mid z)\bigr]
\;+\;\beta\;\mathbb{E}_{h_m}\bigl[\mathrm{KL}\bigl(q_\phi(z\mid h_m)\,\|\,\mathcal{N}(0,I)\bigr)\bigr].
\]
Training is performed on the \textbf{small} List-Pointer \ref{list-recall} split (50 examples, batch size 1), for 5 epochs at each
\(\beta\in\{0.01,\,0.02,\,0.05,\,0.1,\,0.2,\,0.5,\,1.0\}\).  After each run, we record the \emph{average} cross-entropy loss (“distortion”) and KL (“rate”) on the same 50 examples.  Finally, we plot the resulting rate–distortion curves on a \texttt{symlog} x-axis (linear below 20 nats, logarithmic above) so that both the low-rate “knee” and the long tail are visible (see Figure~\ref{fig:figure-3}).

\subsection{Positional Encoding as a Source of Bottleneck Distortion}

The consistent improvements in recall fidelity and length generalization from zeroing positional encodings at meta-token positions \ref{tab:list_recall_no_pos_improvements} are well-explained through through our rate-distortion analysis.

Prior work (e.g., NoPE; \citealp{nope}) shows that Transformers can infer token order from the causal mask alone, often generalizing better when explicit positional embeddings (PEs) are removed. Our results extend this insight: in Theorem \ref{rd-theorem}, we formalize meta-tokens as learned compression bottlenecks.

If a meta-token retains its PE (absolute or rotary), a portion of its representational capacity is spent encoding position rather than semantic content. This index-dependent signal introduces unnecessary variance, increasing the distortion of the compressed summary. By contrast, zeroing out the PE forces the full embedding capacity to encode task-relevant information. As a result, we observe lower distortion (higher retrieval accuracy) at a given rate—both theoretically and empirically—across all four synthetic tasks.

\section{Related Work}

\paragraph{Pause and Memory Tokens}

As detailed in our work, recent studies on Transformer-based models have explored the introduction of special tokens, beyond ordinary vocabulary symbols. \textit{Pause} or \textit{dummy} tokens as introduced in \cite{goyal2024think} enhance computational width, allowing models to perform additional internal computation by effectively delaying their outputs. This yields empirical gains on question answering and reasoning-intensive tasks. Similarly, \cite{pfau2024lets} explore using filler tokens -- sequences of seemingly meaningless symbols -- as a stand-in for chain-of-thought. 
These special tokens may also delineate phases of reasoning, as in Quiet-STaR \cite{zelikman2024quietstar}. Quiet-STaR uses a begin-of-thought token and an end-of-thought token, generating a silent rationale 
sequence for each step before emitting the next word, showing that this helps zero-shot reasoning. 

Works such as Memory Transformer \citep{burtsev2021memorytransformer} and Landmark Attention \citep{mohtashami2023randomaccess} introduce memory tokens; the former prepends them, while the latter uses them as learnable keys for retrieval over blocks of context. Our work is most closely related to the latter, while performing this retrieval in a purely implicit manner via the observed "pointer" mechanism. For vision transformers (ViTs), LeMeVit \citep{lemevit} introduces a similar meta-tokens notion as our work by adding learnable sparse tokens and an attention mechanism between standard tokens and their meta tokens, improving performance and reducing spatial redundancy. \cite{darcet2024vision} uses specialized "register" tokens applies to patches to denoise images by extracting the high-norm, outlier tokens, smoothening the feature and attention maps. These works suggest that  special tokens, even devoid of semantic content, can influence a model’s internal reasoning and memory mechanisms.

\paragraph{Positional Encoding}

We have already described absolute positional embeddings (APE), rotary positional embeddings (RoPE) and relative bias in Section \ref{sec:prelims}. In addition to these methods, ALiBi \citep{press2022trainshorttestlong} adds a fixed linear penalty to attention scores based on the distance between query and key positions, favoring nearer tokens and generalizing to longer contexts with minimal loss in perplexity. 
Recent work has suggested that Transformers without any added position embeddings can still learn order information and, in some cases, generalize to longer sequences better than models with standard positional encoding. NoPE \citep{nope} showed that models trained without positional embeddings can achieve strong length extrapolation in comparison to models trained with positional encoding. They can internally represent both absolute and relative PEs without any explicit positional signal, suggesting these may emerge implicitly via training dynamics or over the data distribution. NoPos \citep{nopos} also found a similar result, suggesting that models trained without PE can infer their absolute position due to causal attention masks. These findings are highly relevant to our work, given our evidence on length generalization behavior whiling zeroing the positional encoding at the meta-tokens.

\section{Discussion and Limitations}\label{sec:discussion-limitations}

Our findings suggest that decoder-only language models trained with meta-tokens and meta-attention achieve strong performance on recall-oriented tasks. Furthermore, they are able to length generalize, with performance improvements when removing the effect of positional encoding at the meta-tokens. Given the prior findings of NoPos, we believe the introduction of the meta-attention mechanism and a second causal mask (the "meta mask") could be responsible for this behavior, provided that this behavior is specific to the meta-tokens. We suggest that hybrid attention methods such as RNoPE \citep{yang2025ropenopeagainnew} could be suitable for facilitating long-context modeling with meta-tokens. 

Given the findings that the meta-tokens operate like anchors within the context, it would be valuable to explore the impact of our proposed mechanism in pre-training larger models over longer context windows, under greater computational resources. We employ synthetic tasks that are well-aligned to recall abilities, and design experiments to test length generalization, with the aim of strong synergy with long-context modeling capabilities. Nonetheless, training larger models would indicate the viability of our approach for real-world deployment.

Notably, our method requires little overhead -- the addition of meta-tokens is a simple data augmentation strategy, and the meta-attention layer is added after standard causal masked self-attention, as described in Appendix \ref{full-architecture}. It would also be informative to study larger-scale corpora -- given the data-efficient nature of the meta-tokens approach in vastly outperforming the vanilla GPT-2 model at the $\approx 100B$ tokens scale, how rapidly does each model saturate our designed synthetic tasks?
\section{Conclusion}

We introduce \textit{meta-tokens} in language model pre-training, in addition to a dedicated meta-attention mechanism which learns the relationship between standard tokens and meta-tokens. We find that this improves performance on a suite of synthetic recall tasks, and enables length generalization behavior when removing the positional encoding at each meta-token. We provide evidence to suggest that the meta-tokens sharpen the positional encoding, enabling them to operate as content-based landmarks in the context; we further show that they implicitly compress preceding context, demonstrated by similar token embeddings. These interesting phenomena demonstrate the promise of long-context language modeling enabled via data-efficient pre-training using meta-tokens. 



\newpage
\appendix

\section{Full Architecture Details}
\label{full-architecture}
We provide a full outline of the architecture design out method uses. Our architecture is equivalent to the NanoGPT (GPT-2) architecture, while introducing the meta-attention block after the initial causal masked attention and layer normalization computation. 

\begin{enumerate}
    \item \textbf{Input Layer:} Given an input sequence of tokens $\mathbf{x} = \{ x_1, x_2, \dots, x_T \}$, we first embed each token into a continuous representation. Instead of absolute positional encodings, we apply Rotary Position Embeddings (RoPE)~\cite{rope} to inject positional information. For each token, the embedded representation is:
\[
\mathbf{e}_t = \text{RoPE}(E(x_t), t),
\]
where $\text{RoPE}(\cdot, t)$ denotes the rotary positional embedding applied to the $t^{\text{th}}$ position, with a base $\theta = 10000.0$. 

    \item \textbf{Causal Masked Self-Attention:} The first layer consists of the causal masked self-attention mechanism. For each head $h$, the attention operation is computed as:
    \[
    \text{CausalAttention}_h(Q, K, V) = \text{softmax} \left( \frac{QK_h^\top}{\sqrt{d_k}} + M \right)V_h,
    \]
    where $Q, K, V$ are the query, key, and value matrices derived from the input embeddings $\mathbf{E}$, and $M$ is the mask matrix.

    \item \textbf{Meta Attention Layer:} After the causal masked self-attention, we integrate the meta-attention mechanism to specifically attend to the injected meta tokens. This operation is defined as:
    \[
    \text{MetaAttention}(Q, K, V, P) = \text{softmax} \left( \frac{QK^\top}{\sqrt{d_k}} + M_{\text{causal}} + P \right)V,
    \]
    where $P$ is the meta mask constructed from the indices of the meta tokens.

    \item \textbf{Feedforward Layer:} Following the attention layers, we pass the output through a feedforward neural network defined by:
    \[
    \text{FFN}(x) = \text{ReLU}(xW_1 + b_1)W_2 + b_2,
    \]
    where $W_1, W_2$ are weight matrices, and $b_1, b_2$ are bias vectors.

    \item \textbf{Layer Normalization:} After both the causal self-attention and meta-attention operations, we apply layer normalization:
    \[
    \text{LayerNorm}(x) = \frac{x - \mu}{\sigma + \epsilon},
    \]
    where $\mu$ and $\sigma$ are the mean and standard deviation of the features, and $\epsilon$ is a small constant for numerical stability.

    \item \textbf{Final Output Layer:} The final layer projects the output of the last feedforward layer back to the vocabulary size to produce the s for the next token prediction:
    \[
    \text{s} = \text{softmax}(x W_{\text{out}} + b_{\text{out}}),
    \]
    where $W_{\text{out}}$ and $b_{\text{out}}$ are the output weight matrix and bias vector, respectively.

\end{enumerate}

\newpage
\section{Pre-training Hyperparameters and Model Details}
\label{hyperparams}
Our decoder-only modified GPT-2 model was pre-trained on the C4 dataset with the following configuration and hyperparameters:

\begin{table}[h]
    \centering
    \caption{Pretraining Configuration Parameters}
    \begin{tabular}{|l|l|}
        \hline
        \textbf{Parameter} & \textbf{Value} \\ \hline
        Batch Size & 12 \\ \hline
        Gradient Accumulation Steps & 40 \\ \hline
        Block Size & 1024 \\ \hline
        Number of Layers & 12 \\ \hline
        Number of Heads & 12 \\ \hline
        Embedding Size & 768 \\ \hline
        Learning Rate & 6e-4 \\ \hline
        Weight Decay & 1e-1 \\ \hline
        Max Iterations & 600,000 \\ \hline
        Warmup Iterations & 2,000 \\ \hline
        Minimum Learning Rate & 6e-5 \\ \hline
        Dropout Rate & 0.0 \\ \hline
        RoPE Theta & 10000.0 \\ \hline
        Initial Model & Resume \\ \hline 
         Optimizer & AdamW \\ \hline
         AdamW Beta1 & 0.90 \\ \hline 
         AdamW Beta2 & 0.95 \\ \hline 
         Gradient Clipping & 1.0 \\ \hline 
         Tokenizer & tiktoken \\
        \hline
    \end{tabular}
    \label{tab:pretraining_config}
\end{table}

\section{YaRN Hyperparameters} \label{yarn}
\begin{table}[H]
\centering
\begin{tabular}{lcc}
\toprule
\textbf{Parameter} & \textbf{4096-token model} & \textbf{8192-token model} \\
\midrule
yarn\_scale & 4.0 & 8.0 \\
yarn\_original\_max\_seq\_len & \multicolumn{2}{c}{1024} \\
yarn\_extrapolation\_factor & \multicolumn{2}{c}{1.0} \\
yarn\_attn\_factor & \multicolumn{2}{c}{1.0} \\
yarn\_beta\_fast & \multicolumn{2}{c}{32.0} \\
yarn\_beta\_slow & \multicolumn{2}{c}{1.0} \\
\bottomrule
\end{tabular}
\caption{YaRN parameter configurations for extended context models.}
\label{tab:yarn-parameters}
\end{table}

\section{Additional Experimental Details}\label{appendix:further-expt-details}

\subsection{Synthetic Data Generation}
We generate 90,000 train examples and held-out test set of 10,000 examples for each task.
\subsubsection{List Recall}\label{list-recall}
We generate a suite of “list-pointer” examples by sampling random categories and list items, inserting a special meta token as a marker, and asking the model to recover the item immediately following the meta-token. Each example consists of:
\begin{enumerate}
    \item m categories drawn without replacement from a fixed set of 20.
    \item n items per category, sampled with replacement from the category’s 10–item inventory

    \item One “target” category in which we inject a single meta token after the jth item ($j \in [n]$) and then append the remaining items

    \item A question line “Q: What is item j of <target>? \_META\_”
\end{enumerate}

This pipeline yields curriculum‐structured data that systematically probes the model’s ability to attend to and copy items in long, multi‐list contexts.

\begin{table}[h]
\centering
\label{curric}
\begin{tabular}{c|c|c|c}
\textbf{Phase} & \(\mathbf{m}\) (Num.\ categories) & \(\mathbf{n}\) (List length) & \textbf{Approx.\ prompt‐token range} \\ \hline
1 & Uniform \(3\text{--}8\)  & Uniform \(3\text{--}10\)            & Short (\(\approx100\text{--}200\) tokens) \\
2 & Uniform \(8\text{--}12\) & Uniform \(3\text{--}16\) (bimodal) & Mid‐range (\(\approx200\text{--}300\) tokens) \\
3 & Uniform \(12\text{--}19\)& Mixture \(\{3\text{--}8,9\text{--}16,17\text{--}25\}\) & Full‐range (\(\approx500\text{--}700\) tokens) \\
4 & Uniform \(15\text{--}20\)& Uniform \(40\text{--}60\)           & “Extra‐hard” \(\le1024\) tokens \\
5 & Uniform \(15\text{--}20\)& Uniform \(90\text{--}110\)         & “Long” \(\le2048\) tokens \\
\end{tabular}
\caption{Curriculum schedule for synthetic data.}
\end{table}

\subsubsection{Segment Counting} 
Similar to List-pointer, except the model must count occurrences of a target token within a meta-token bracketed list segment. Uses the schedule dictated by Table \ref{curric}. Asks the question: "Q: How many times does <token> appear between the pauses around <Category>? \_META\_".

\subsubsection{Parity}
Generates examples where the model computes the XOR (parity) of a bit‐string segment up to the first L characters where L is drawn phase-dependently. the same scheduling dictated by Table \ref{curric}. Asks the question: "Q: What is the XOR of all bits before this pause? \_META\_ "

\subsubsection{Copying}
Generates examples where the model must copy a bracketed span from a text. Uses schedule dictated by Table \ref{curric} and samples an additional copy length $C$ and distance length $D$ depending on the phase

\section{Licenses}\label{appendix:licenses}
\subsection*{nanoGPT}
Our implementation of the vanilla GPT-2 is based on the nanoGPT repository (\url{https://github.com/karpathy/nanoGPT}), which is licensed under the MIT License.

\subsection*{EleutherAI GPT-Neo-125M}
We directly use the EleutherAI GPT-Neo 125M model checkpoint and weights, available via the Hugging Face Model Hub at \url{https://huggingface.co/EleutherAI/gpt-neo-125m}. This model is released under the MIT License.

\subsection*{C4 Dataset}
Our model was trained on the C4 dataset (\url{https://huggingface.co/datasets/allenai/c4}), which is provided under the Open Data Commons Attribution License (ODC-BY).

\subsection*{tiktoken}
We use the \texttt{tiktoken} library from OpenAI for tokenization (\url{https://github.com/openai/tiktoken}), which is released under the MIT License.

\newpage

\section{Complete Experimental Results}\label{appendix:full-tables}

\subsection{Synthetic Task Accuracies Across Test Lengths}

We stress test RoPE models at a sequence length of 2048—twice the pretraining block size of 1024—as relative position embeddings naturally support extrapolation beyond the training context window. In contrast, absolute positional encodings (APE) cannot generalize to sequences longer than those seen during pretraining.

\begin{table}[h]
\centering
\caption{Accuracy (\%) across evaluation lengths for each model train on List Recall}
\label{tab:length_generalization}
\begin{tabular}{lcccccc}
\toprule
\textbf{Model (Train Length)} & \textbf{128} & \textbf{256} & \textbf{512} & \textbf{1024} & \textbf{2048}  \\
\midrule
GPT-2 APE (128)          & 4.2 & 1.2 & 0.0 & 0.0 & ---  \\
GPT-2 APE (256)          & 6.8 & 2.4 & 0.0 & 0.0 & ---  \\
GPT-2 APE (512)          & 19.8 & 9.5 & 3.6 & 0.0 & ---  \\
Meta + APE (128)          & 100.0 & 86.4 & 12.0 & 4.1 & ---  \\
Meta + APE (256)          & 100.0 & 98.6 & 42.6 & 3.9 & ---  \\
Meta + APE (512)          & 100.0 & 100.0 & 98.7 & 11.1 & ---  \\
Meta + RoPE (128)         & 100.0 & 60.7 & 5.9 & 0.0  & 0.0 \\
Meta + RoPE (256)         & 100.0 & 100.0 & 48.6 & 23.5 & 0.0  \\
Meta + RoPE (512)         & 100.0 & 100.0 & 99.3 & 58.9 & 5.6  \\

GPT-Neo-125M   & 85.6    & 86.0 & 81.2 & --- &   \textemdash & \\
\bottomrule

\end{tabular}
\end{table}

\begin{table}[h]
\centering
\caption{Accuracy (\%) across evaluation lengths for each model trained on Segment Counting. Each model is evaluated on longer contexts than seen during training.}
\label{tab:length_generalization_segment}
\begin{tabular}{lcccccc}
\toprule
\textbf{Model (Train Length)} & \textbf{128} & \textbf{256} & \textbf{512} & \textbf{1024} & \textbf{2048}  \\
\midrule
GPT-2 APE (128)          & 32.1 & 27.4 & 20.2 & 0.0 & --- \\
GPT-2 APE (256)          & 40.3 & 56.2 & 23.7 & 0.0 & ---  \\
GPT-2 APE (512)          & 30.1 & 32.1 & 25.0 & 0.0 & --- \\
Meta + APE (128)          & 77.4 & 55.9 & 25.0 & 11.1 & ---  \\
Meta + APE (256)          & 83.3 & 77.4 & 53.6 & 22.4 & --- \\
Meta + APE (512)          & 91.7 & 79.8 & 80.9 & 33.3 & ---  \\
Meta + RoPE (128)                 & 77.4 & 64.3 & 25.0 & 22.7 & 0.0  \\
Meta + RoPE (256)                 & 64.3 & 64.3 & 35.7 & 33.3 & 0.0  \\
Meta + RoPE (512)                 & 90.9 & 91.4 & 95.3 & 66.7 & 11.1  \\

GPT-Neo-125M                             & 31.4 & 25.9 & 24.9 & --- & \textemdash  \\
\bottomrule
\end{tabular}
\end{table}

\begin{table}[h]
\centering
\caption{Accuracy (\%) across evaluation lengths for each model train on Parity}
\label{tab:length_generalization}
\begin{tabular}{lcccccc}
\toprule
\textbf{Model (Train Length)} & \textbf{128} & \textbf{256} & \textbf{512} & \textbf{1024} & \textbf{2048} \\
\midrule
GPT-2 APE (128)          & 75.0 & 56.0 & 53.4 & 45.2 & ---  \\
GPT-2 APE (256)          & 75.0 & 67.0 & 60.7 & 46.2 & ---  \\
GPT-2 APE (512)          & 75.0 & 54.8 & 60.0 & 40.5 & --- \\
Meta + APE (128)          & 100.0 & 75.0 & 67.9 & 52.4 & ---  \\
Meta + APE (256)          & 100.0 & 97.6 & 96.4 & 69.1 & ---  \\
Meta + APE (512)          & 100.0 & 100.0 & 100.0 & 86.7 & --- \\
Meta + RoPE (128)                 & 100.0 & 66.7 & 76.2 & 59.5 & 44.1  \\
Meta + RoPE (256)                 & 97.6 & 100.0 & 96.4 & 61.9 & 52.4  \\
Meta + RoPE (512)                 & 100.0 & 100.0 & 100.0 & 69.1 & 63.1  \\

GPT-Neo-125M  & 80.4 & 59.1 & 54.8 & --- &  \textemdash \\
\bottomrule
\end{tabular}
\end{table}

\begin{table}[h]
\centering
\caption{Accuracy (\%) across evaluation lengths for each model trained on Copying}
\label{tab:length_generalization}
\begin{tabular}{lcccccc}
\toprule
\textbf{Model (Train Length)} & \textbf{128} & \textbf{256} & \textbf{512} & \textbf{1024} & \textbf{2048}  \\
\midrule
GPT-2 APE (128) & 6.0 & 5.3 & 3.0 & 0.0 & ---  \\
GPT-2 APE (256) & 6.8 & 6.0 & 5.7 & 0.0 & --- \\
GPT-2 APE (512) & 3.8 & 4.8 & 7.8 & 0.0 & ---  \\
Meta + APE (128) & 100.0 &  66.7 & 76.2 & 2.6 & \textemdash \\
Meta + APE (256) & 100.0 & 100.0 & 96.4 & 7.9 & \textemdash  \\
Meta + APE (512) & 100.0 & 100.0 & 98.5 & 87.4 & ---\\
Meta + RoPE (128) & 96.6 & 73.0 & 5.2 & 0.0 & 0.0  \\
Meta + RoPE (256) & 98.2 & 100.0 & 23.6 & 9.3 & 3.2  \\
Meta + RoPE (512) & 99.0 & 98.9 & 98.9 & 89.4 & 11.8  \\
GPT-Neo-125M & 31.5 & 22.7 & 16.9 & --- & \textemdash  \\
\bottomrule
\end{tabular}
\end{table}

\newpage
\subsection{Ablations on Positional Encoding and Token Embedding}

\begin{table}[H]
\centering
\caption{Accuracy (\%) on the List-Recall task under different ablations: zeroing the  positional encoding (No Pos), zeroing the text embeddings (No Embed), or zeroing both of the meta-tokens.}
\label{tab:ablation_results}
\begin{tabular}{lcccc}
\toprule
\textbf{Model (PE)} & \textbf{Full} & \textbf{No Pos} & \textbf{No Embed} & \textbf{Neither} \\
\midrule
Meta + APE (128)     & 100.0 & 99.3 & 17.4 & 59.7 \\
Meta + RoPE (128) & 100.0 & 100.0 & 32.4 & 24.0 \\
Meta + APE (256)  & 86.4  & 86.9 & 12.2  & 16.2 \\
Meta + RoPE (256) & 100.0 & 100.0  & 4.0 & 6.6 \\
Meta + APE (512)       & 100.0 & 100.0 & 52.1  & 84.3 \\
Meta + RoPE (512)      & 100.0 & 100.0 & 59.6  & 25.2 \\

\bottomrule
\end{tabular}
\end{table}

\begin{table}[h]
\centering
\caption{Accuracy (\%) on the Segment Counting task under different ablations: zeroing the positional encoding (No Pos), text embeddings (No Embed), or both, only on the meta-token.}
\label{tab:ablation_count_segment}
\begin{tabular}{lcccc}
\toprule
\textbf{Model (Train Length)} & \textbf{Full} & \textbf{No Pos} & \textbf{No Embed} & \textbf{Neither} \\
\midrule
Meta + APE (128)    & 77.4 & 63.1 & 31.0 & 47.6 \\
Meta + APE (256)    & 83.3 & 88.1 & 32.1 & 40.5 \\
Meta + APE (512)    & 91.7 & 82.1 & 34.5 & 51.2 \\
Meta + RoPE (128)   & 77.4 & 70.2 & 59.5 & 36.9 \\
Meta + RoPE (256)   & 64.3 & 53.6 & 30.9 & 30.9 \\
Meta + RoPE (512)   & 80.9 & 72.6 & 36.9 & 25.0 \\
\bottomrule
\end{tabular}
\end{table}

\begin{table}[h]
\centering
\caption{Accuracy (\%) on the Parity task under different ablations: zeroing the positional encoding (No Pos), text embeddings (No Embed), or both, only on the meta-token.}
\label{tab:ablation_parity}
\begin{tabular}{lcccc}
\toprule
\textbf{Model (Train Length)} & \textbf{Full} & \textbf{No Pos} & \textbf{No Embed} & \textbf{Neither} \\
\midrule
Meta + APE (128)    & 100.0 & 100.0 & 100.0 & 100.0 \\
Meta + APE (256)    & 75.0  & 77.4  & 77.4  & 79.8 \\
Meta + APE (512)    & 67.9  & 71.4  & 72.6  & 66.7 \\
Meta + RoPE (128)   & 100.0 & 97.6  & 100.0 & 100.0 \\
Meta + RoPE (256)   & 66.7  & 66.7  & 73.8  & 66.7 \\
Meta + RoPE (512)   & 76.2  & 75.0  & 75.0  & 64.3 \\
\bottomrule
\end{tabular}
\end{table}

\begin{table}[h]
\centering
\caption{Accuracy (\%) on the Copying task under different ablations: zeroing the positional encoding (No Pos), text embeddings (No Embed), or both, only on the meta-token.}
\label{tab:ablation_copy}
\begin{tabular}{lcccc}
\toprule
\textbf{Model (Train Length)} & \textbf{Full} & \textbf{No Pos} & \textbf{No Embed} & \textbf{Neither} \\
\midrule
Meta + APE (128) & 96.6 & 93.2 & 7.2 & 4.8 \\
Meta + APE (256) & 98.2 & 99.6 & 5.0 & 3.6 \\
Meta + APE (512) & 99.0 & 96.6 & 5.7 & 5.4 \\
Meta + RoPE (128) & 100.0 & 99.6 & 6.9 & 4.9 \\
Meta + RoPE (256) & 100.0 & 100.0 & 4.5 & 5.1 \\
Meta + RoPE (512) & 100.0 & 95.6 & 6.9 & 4.9 \\
\bottomrule
\end{tabular}
\end{table}

\newpage
\subsection{Positional Encoding Robustness Ablations}

\begin{table}[H]
\centering
\caption{Accuracy (\%) on the List Pointer task with Gaussian noise added to positional encoding.}
\label{tab:pause_rope_noise}
\begin{tabular}{lccccc}
\toprule
\textbf{Model (Train Length)} & \textbf{Noise 0.0} & \textbf{Noise 0.1} & \textbf{Noise 0.5} & \textbf{Noise 1.0} & \textbf{Noise 2.0} \\
\midrule
GPT-2 + APE (128)     & 4.8 & 1.2 & 2.4 & 2.6 & 3.5 \\
GPT-2 + APE (256)     & 17.4 & 11.9 & 4.6 & 3.6 & 3.2 \\
GPT-2 + APE (512)     & 14.0 & 16.3 & 16.7 & 17.9 & 14.3 \\
Meta + APE (128)   & 98.7  & 98.6  & 67.5  & 55.6  & 42.8 \\
Meta + APE (256)   & 81.8  & 79.7  & 48.9  & 43.1  & 37.9 \\
Meta + APE (512)   & 100.0 & 100.0 & 79.5  & 65.5  & 57.1 \\
Meta + RoPE (128)  & 98.1  & 100.0 & 100.0 & 96.0  & 88.9 \\
Meta + RoPE (256)  & 100.0 & 100.0 & 100.0 & 97.9  & 82.6 \\
Meta + RoPE (512)  & 100.0 & 100.0 & 100.0 & 98.8  & 81.0  \\
\bottomrule
\end{tabular}
\end{table}

\begin{table}[h]
\centering
\caption{Accuracy (\%) on the Copying task with Gaussian noise added to positional encoding.}
\label{tab:noise_copy}
\begin{tabular}{lccccc}
\toprule
\textbf{Model (Train Length)} & \textbf{Noise 0.0} & \textbf{Noise 0.1} & \textbf{Noise 0.5} & \textbf{Noise 1.0} & \textbf{Noise 2.0} \\
\midrule
GPT-2 Abs (128)     & 2.9 & 1.2 & 0.0  & 0.0  & 0.0   \\
GPT-2 Abs (256)     & 6.0 & 7.1 & 3.6  & 0.8  & 0.7   \\
GPT-2 Abs (512)     & 6.0  & 5.8 & 3.6  & 0.4  & 0.3   \\
Meta + APE (128) & 96.1 & 98.5 & 69.8 & 58.6 & 54.9 \\
Meta + APE (256) & 100.0 & 100.0 & 76.3 & 68.8 & 57.2 \\
Meta + APE (512) & 98.9 & 98.7 & 74.4 & 68.9 & 50.5 \\
Meta + RoPE (128) & 100.0 & 100.0 & 75.9 & 68.6 & 49.9 \\
Meta + RoPE (256) & 100.0 & 100.0 & 82.6 & 65.6 & 45.1 \\
Meta + RoPE (512) & 100.0 & 100.0 & 84.4 & 67.6 & 46.3 \\
\bottomrule
\end{tabular}
\end{table}

\newpage
\subsection{Length Generalization Ability under No Positional Encoding Ablation}

\begin{table}[H]
\centering
\caption{List-Recall: “No Pos” vs. Full accuracy for Meta-attention with APE and Meta-attention with RoPE.}
\label{tab:list_recall_no_pos_all}
\begin{tabular}{lccc}
\toprule
\textbf{Model (Split, Train Len)} & \textbf{Full} & \textbf{No Pos} & \textbf{$\Delta$ (pp)} \\
\midrule
Meta + APE (128) & & & \\
\quad small & — & — & — \\
\quad medium & 77.8\% & 88.9\% & +11.1 \\
\quad hard & 11.1\% & 22.2\% & +11.1 \\

\addlinespace
Meta + APE (256) & & & \\
\quad small & 100.0\% & 100.0\% & 0.0 \\
\quad medium & 100.0\% & 100.0\% & 0.0 \\
\quad hard & 44.4\% & 22.2\% & –22.2 \\

\addlinespace
Meta + APE (512) & & & \\
\quad small & — & — & — \\
\quad medium & — & — & — \\
\quad hard & 100.0\% & 100.0\% & 0.0 \\

\midrule
Meta + RoPE (128) & & & \\
\quad small & — & — & — \\
\quad medium & 44.4\% & 55.6\% & +11.1 \\
\quad hard & 11.1\% & 11.1\% & 0.0 \\
\quad extra-hard & 0.0\% & 0.0\% & 0.0 \\
\quad long & 0.0\% & 11.1\% & +11.1 \\

\addlinespace
Meta + RoPE (256) & & & \\
\quad small & 100.0\% & 100.0\% & 0.0 \\
\quad medium & 100.0\% & 100.0\% & 0.0\% \\
\quad hard & 33.3\% & 66.7\% & +33.3 \\
\quad extra-hard & 0.0\% & 22.2\% & +22.2 \\
\quad long & 0.0 & 0.0 & 0.0 \\

\addlinespace
Meta + RoPE (512) & & & \\
\quad small & — & — & — \\
\quad medium & 100.0\% & 100.0\% & 0.0 \\
\quad hard & 100.0\% & 100.0\% & 0.0 \\
\quad extra-hard & 44.4\% & 55.6\% & +11.1 \\
\quad long & 0.0\% & 0.0\% & 0.0 \\
\bottomrule
\end{tabular}
\end{table}

\newpage
\section{Theoretical Analysis}\label{appendix:theory}

\subsection{Proof of Theorem 4.1}

\begin{lemma}\label{lemma:E.1}
    Let $\ell_1, \ell_2, \dots, \ell_N$ be logits and define softmax distribution $\alpha_j = \frac{\exp(\ell_j)}{\sum_{k=1}^N \exp(\ell_k)}$. Suppose that for some "correct" index $j^*$ we have $\ell_{j^*} = L$, and for all other indices $j \neq j^*$, $\ell_j \leq L - \Delta$ for some $\Delta > 0$. Then, entropy $H(\alpha)$ is strictly decreasing in $\Delta$.  
\end{lemma}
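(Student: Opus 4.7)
The plan is to reduce the claim to a one-parameter derivative check, along the path hinted at in the sketch of Theorem~4.1. I introduce $t \in [0,\Delta]$ and interpolate by setting $\ell_{j^*}(t) = L - \Delta + t$ while holding $\ell_j(t) = \ell_j$ fixed for $j \ne j^*$; let $\alpha(t)$ denote the resulting softmax and $h(t) = H(\alpha(t))$. Then $h(0)$ corresponds to zero margin and $h(\Delta)$ to the configuration in the lemma statement, so it suffices to establish $h'(t) < 0$ on $(0,\Delta)$.

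For the derivative, I use $H(\alpha) = \log Z(t) - \sum_j \alpha_j(t)\,\ell_j(t)$ with $Z(t) = \sum_k e^{\ell_k(t)}$, together with the standard identities $(d/dt)\log Z = \alpha_{j^*}(t)$ and $\partial \alpha_j / \partial \ell_{j^*} = \alpha_j(\delta_{jj^*} - \alpha_{j^*})$. After the two $\alpha_{j^*}$ contributions cancel, the computation collapses to
\[
h'(t) \;=\; -\,\alpha_{j^*}(t)\bigl(\ell_{j^*}(t) - \mathbb{E}_{\alpha(t)}[\ell]\bigr) \;=\; -\,\mathrm{Cov}_{\alpha(t)}\bigl(\mathbf{1}_{j=j^*},\,\ell\bigr).
\]
This expresses $h'$ as a negative covariance, which is the form that makes its sign transparent.

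The closing step is to verify strict negativity on $(0,\Delta)$. The prefactor $\alpha_{j^*}(t)$ is positive because softmax weights never vanish. For $t > 0$ and every $j \ne j^*$ the hypothesis gives $\ell_j \le L - \Delta < L - \Delta + t = \ell_{j^*}(t)$, so $\ell_{j^*}(t)$ is the unique maximum logit; since $\alpha(t)$ places strictly positive mass on at least one smaller coordinate (assuming $N \ge 2$; otherwise $H \equiv 0$ and the claim is vacuous), we obtain $\mathbb{E}_{\alpha(t)}[\ell] < \ell_{j^*}(t)$ strictly. Integrating $h' < 0$ over $(0,\Delta)$ then yields $H(\alpha^{(\Delta)}) < H(\alpha^{(0)})$, which is the stated monotonicity.

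The main subtlety I expect is the boundary at $t = 0$: the hypothesis permits $\ell_j = L - \Delta$, so the derivative may only be non-positive rather than strictly negative at the left endpoint. This is harmless because strict negativity on the open interval already forces strict monotonicity after integration. A conceptual alternative would invoke Schur-concavity of Shannon entropy \citep{marshall2011inequalities} together with a majorization argument showing that the softmax vector becomes more peaked as the $j^*$-th logit is boosted, but the calculus route is shorter and delivers the strict inequality directly.
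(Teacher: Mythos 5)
Your proof is correct, and it takes a genuinely different route from the paper's own proof of Lemma~\ref{lemma:E.1}. The paper first uses the bound $\ell_j \le L - \Delta$ to derive a lower bound $p = \tfrac{1}{1+(N-1)e^{-\Delta}}$ on $\alpha_{j^*}$, then appeals to Schur-concavity to show the entropy is maximized when the remaining mass $1-p$ is split uniformly over the other $N-1$ coordinates, obtaining the closed-form upper bound $H(\alpha) \le -p\log p - (1-p)\log(1-p) + (1-p)\log(N-1)$, and finally differentiates \emph{that bound} with respect to $\Delta$ via the chain rule, finding $\tfrac{dH}{dp} = -\Delta$ and $\tfrac{dp}{d\Delta} > 0$. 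You instead differentiate the entropy itself along the logit-boost path $\ell_{j^*}(t) = L - \Delta + t$, using $H = \log Z - \mathbb{E}_\alpha[\ell]$ and the softmax Jacobian to collapse $h'(t)$ to $-\alpha_{j^*}(t)\bigl(\ell_{j^*}(t) - \mathbb{E}_{\alpha(t)}[\ell]\bigr)$, which is manifestly a negative covariance and strictly negative for $t>0$ because $\ell_{j^*}(t)$ is the unique maximal logit. Your calculation is correct (the two $\alpha_{j^*}$ terms do cancel as you claim, and $\mathrm{Cov}_\alpha(\mathbf{1}_{j=j^*},\ell) = \alpha_{j^*}(\ell_{j^*} - \mathbb{E}_\alpha[\ell])$), and your handling of the $t=0$ endpoint and the degenerate $N=1$ case is careful. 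What your approach buys: it controls the actual entropy rather than only a Schur-concavity upper bound on it, so the strict monotonicity conclusion is about $H(\alpha)$ itself; and the covariance form makes the ``sharpening'' mechanism transparent. What the paper's approach buys: the explicit closed form $-p\log p - (1-p)\log\tfrac{1-p}{N-1}$, which gives a quantitative worst-case bound in terms of $N$ and $\Delta$. Notably, your argument is essentially the same covariance computation the paper later deploys in its proof of Theorem~\ref{thrm:logits-entropy} (where it derives $\tfrac{d}{dt}H(\alpha) = -\mathrm{Cov}_\alpha(\ell',\ln\alpha)$, equivalent to yours since $\ln\alpha_j = \ell_j - \log Z$ and additive constants drop out of a covariance), so you have effectively proved the lemma by the method the paper reserves for the theorem.
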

\begin{proof}[Proof.]
First, we can group the other logits (i.e. $j \neq j^*$, such that $S = \sum\limits_{j \neq j^*} e^{\ell_j}$. Then, since each $\ell_j$ carries the property that $e^{\ell_j} \leq e^{L-\Delta}$ given $\ell_{j^*} = L$, we have that $S \leq (N-1)e^{L-\Delta}$ since there are $N-1$ terms. Revisiting the softmax $\alpha$, we have that $\alpha_{j^*} = \frac{e^L}{e^L + S} \geq \frac{e^L}{e^L + (N-1)e^{L-\Delta}} 
= \frac{1}{1 + (N-1)e^{-\Delta}}$. We will denote this quantity as $p$ henceforth. Next, each other softmax $\alpha_j$ for $j \neq j^*$ must have the property that $\alpha_j = \frac{e^{\ell}}{e^{L} + S} \leq \frac{e^{L-\Delta}}{e^L(1+(N-1)e^{-\Delta})} = \frac{e^{-\Delta}}{1+(N-1)e^{-\Delta}} = \frac{1-p}{N-1}$.

As a result, we have the following entropy maximization problem:
\[
\begin{aligned}
& \underset{\alpha_1,\dots,\alpha_N}{\text{maximize}}
& & -\sum_{j=1}^N \alpha_j \,\log \alpha_j
\\
& \text{subject to}
& & \sum_{j=1}^N \alpha_j = 1,
\\
& & & \alpha_{j^*} = p,
\\
& & & \alpha_j \ge 0,
\quad j=1,\dots,N.
\end{aligned}
\]

Observe that the entropy (objective) function is Schur-concave in $\alpha$, so it is maximized when $\alpha_{j^*} = p$ and the remaining softmax mass is split uniformly over the $N-1$ elements, i.e. $\alpha_j = \frac{1-p}{N-1} \hspace{0.5mm} \forall \hspace{0.5mm} j \neq j^*$. Plugging this in for $H(\alpha)$ yields:
\begin{align}
    H(\alpha) \leq -p \log p - (1-p) \log (1-p) + (1-p)\log(N-1)
\end{align}

Next, we aim to study the relationship between $H$ and $\Delta$. By the chain rule, $\frac{dH}{d\Delta} = \frac{dH}{dp} \cdot \frac{dp}{d\Delta}$. $\frac{dH}{dp} = -(1 + \log p) + \log\frac{1-p}{N-1} + 1 = \log \frac{1-p}{(N-1)p}$. Substituting $\frac{1-p}{p} = (N-1)e^{-\Delta}$, we get $\frac{dH}{dp} = -\Delta$ and since $\Delta > 0$, $\frac{dH}{dp} < 0$. We then turn to $\frac{dp}{d\Delta} = \frac{(N-1)e^{-\Delta}}{[1 + (N-1)e^{-\Delta}]^2} > 0$ since both numerator and denominator must be $> 0$. Therefore, $\frac{dH}{d\Delta} = -\Delta \frac{(N-1)e^{-\Delta}}{[1 + (N-1)e^{-\Delta}]^2} < 0$, meaning that $H(\alpha)$ is strictly decreasing in the margin $\Delta$. 
\end{proof}

We will now use Lemma \ref{lemma:E.1} to prove Theorem \ref{thrm:logits-entropy}. 

\begin{proof}[Proof of Theorem 4.1.]
    Consider a parametrized path by variable $t \in [0,\Delta]$; define $\ell_j^{(t)} = \ell_j + \delta_{j,j^*}t$, and $\alpha_j^{(t)} = \frac{e^{\ell_j^{(t)}}}{\sum\limits_{k=1}^N e^{\ell_k^{(t)}}} = \frac{e^{(\ell_j + \delta_{j,j^*}t)}}{\sum\limits_{k=1}^N e^{(\ell_k + \delta_{k,j^*}t)}}$. Define $\ell_j^{'(t)} = \frac{d}{dt}\ell_j^{(t)}$ and $\alpha_j^{'(t)} = \frac{d}{dt}\alpha_j^{(t)}$.

    Next, we differentiate the entropy $H(\alpha)$ with respect to $t$:
    \begin{align*}
        \frac{d}{dt}H(\alpha) = -\sum_{j=1}^N [\alpha_j' \ln \alpha_j + \alpha_j \frac{\alpha_j'}{\alpha_j}] = -\sum\limits_{j=1}^N \alpha_j'(1 + \ln \alpha_j) = -\sum_{j=1}^N \alpha_j' + \alpha_j' \ln \alpha_j
    \end{align*}

    Since $\sum {\alpha_j'} = 0$ due to $\sum \alpha_j = 1$, this simply reduces to $\frac{d}{dt} H(\alpha) = -\sum_{j=1}^N \alpha_j' \ln \alpha_j$. 

    From \cite{cover-and-thomas}, we have that $\alpha_j' = \alpha_j(\ell_j - \mathop{\mathbb{E}}_\alpha[\ell'])$, where $\mathop{\mathbb{E}}_\alpha[\ell'] = \sum\limits_{k=1}^N \alpha_k \ell_k'$. Plugging this into the expression for the derivative of entropy with respect to $t$:
    \begin{align*}
        \frac{d}{dt}H(\alpha) = -\sum_j \alpha_j(\ell_j' - \mathop{\mathbb{E}_\alpha[\ell']}) \ln \alpha_j = -(\sum_j a_j \ell_j' \ln \alpha_j - \mathop{\mathbb{E}_\alpha[\ell']} \sum_j \alpha_j \ln \alpha_j)
    \end{align*}

    Observe that $\sum_j \alpha_j \ln \alpha_j = \mathop{\mathbb{E}_{\alpha}}[\ln \alpha]$ so this simply reduces as:
    \begin{align}
        \frac{d}{dt} H(\alpha) = -(\mathop{\mathbb{E}_{\alpha}}[\ell' \ln \alpha] - \mathop{\mathbb{E}_{\alpha}}[\ell']\mathop{\mathbb{E}_{\alpha}}[\ln \alpha]) = -\text{Cov}_{\alpha}(\ell', \ln \alpha)
    \end{align}

    Revisiting the meta-token setup where only the "correct" logit at $j^*$ is boosted, this suggests that $\ell_j' = \textbf{1}(j = j^*)$. Therefore, $\mathop{\mathbb{E}_{\alpha}}[\ell'] = \alpha_{j^*}$ and $\mathop{\mathbb{E}_{\alpha}}[\ell'\ln \alpha] = \alpha_{j^*} \ln \alpha_{j^*}$. This can be substituted into the covariance term above:
    \[\frac{d}{dt} H(\alpha) = -\text{Cov}_{\alpha}(\ell', \ln \alpha) = -(\alpha_{j^*} \ln \alpha_{j^*} - \alpha_{j^*} \mathop{\mathbb{E}_{\alpha}}[\ln \alpha]) = -\alpha_{j^*}(\ln \alpha_{j^*} - \mathop{\mathbb{E}_{\alpha}}[\ln \alpha])\]

    Due to the Schur-concavity of $H(\alpha)$ \citep{marshall2011inequalities}, $\ln \alpha_{j^*} = \max_j \ln \alpha_j$ and $\ln \alpha_{j^*} > \mathop{\mathbb{E}_{\alpha}}[\ln \alpha]$. As such, given $\alpha_{j^*} > 0$ and $\ln \alpha_{j^*} - \mathop{\mathbb{E}_{\alpha}}[\ln \alpha] > 0$, this suggests that $\text{Cov}_{\alpha}(\ell', \ln \alpha) > 0$ and thus, $\frac{d}{dt} H(\alpha) < 0$. Therefore, we conclude that adding a positive logit boost at the meta-token index ("correct" logit) strictly decreases entropy, supporting the proposed "anchoring" effect notion. 
    
\end{proof}

\subsection{Proof of Theorem 5.1}

\begin{proof}[Proof of Theorem 5.1.]
    The meta-tokens are simply a new (latent) channel that may be utilized to search for candidate distributions. However, this latent can be ignored, yielding the original search space; that is, any encoder $q_{\phi}(\hat{x} \mid x)$ that does not use meta-tokens can be implemented in the meta-token model by zeroing out all meta-token contributions. Therefore, $\mathcal{Q}_{\rm abs} \subseteq \mathcal{Q}_{\rm meta}$, where $q = (q_{\phi}, q_{\theta})$ over the feasible combinations of encoder and decoder. Naturally, minimizing a function over a larger feasible set cannot increase its minimum. Thus, for a fixed rate $R$, 
\[
D_{\rm meta}(R)
=\min_{q\in\mathcal{Q}_{\rm meta}\,\colon\,I(X;\hat{X})=R}D(q)
\;\le\;
\min_{q\in\mathcal{Q}_{\rm abs}\,\colon\,I(X;\hat{X})=R}D(q)
= D_{\rm abs}(R).
\]
    
    Note that the same result holds for RoPE in place of APE (i.e. $D_{\text{RoPE}}$ in place of $D_{\text{abs}}$), as well. 
\end{proof}

\subsection{Theorem C.2}

\begin{theorem}\label{thrm:biases}
    Consider functions $p: \{0, \dots, T-1\} \rightarrow \mathbb{R}$ and $b: \{-(T-1), \dots, T-1\} \rightarrow \mathbb{R}$ for absolute postional biases and relative biases, respectively. Let $\mathcal{B}_{\text{abs}}$ to be the set of all fixed absolute positional bias matrices $B_{i,j}^{\text{abs}} = p(j)$ and $\mathcal{B}_{\text{rel}}$ to be the set of all fixed relative biases $B_{i,j}^{\text{rel}} = b(i-j)$. Let $\mathcal{B}_{\text{meta}}$ be the set of bias matrices implementable by the Transformer augmented with meta-token embeddings $\{m_t\}$ which emit a content-dependent logit boost at their respective indices. Then, \begin{align}
        \mathcal{B}_{\text{abs}} \cup \mathcal{B}_{\text{rel}} \subsetneq \mathcal{B}_{\text{meta}}
    \end{align}
\end{theorem}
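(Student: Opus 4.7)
The plan is to prove Theorem~\ref{thrm:biases} by splitting the statement into containment and strict inclusion. Throughout, I would rely on the meta-token logit-boost form recorded in the paragraph following Theorem~\ref{thrm:logits-entropy}: a meta-token at position $t$ with learned embedding $e_t$ contributes $\Delta_{i,j}^{(t)} = (Q_i \cdot W e_t)\,\mathbf{1}_{j=t}$ to the attention logit at $(i,j)$, so the full bias produced by meta-tokens sitting at positions $S \subseteq \{0,\dots,T-1\}$ has the form $B_{i,j}^{\text{meta}} = (Q_i \cdot W e_j)\,\mathbf{1}_{j \in S}$. The design variables available to the construction are the meta-token set $S$, the meta-token embeddings $\{e_t\}_{t\in S}$, and the learned linear head $W$.

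I would first establish $\mathcal{B}_{\text{abs}} \subseteq \mathcal{B}_{\text{meta}}$. Given $p:\{0,\dots,T-1\}\to\mathbb{R}$, take $S = \{0,\dots,T-1\}$ and choose $W e_t = p(t)\cdot u$ for a direction $u$ along which all queries project to a common scalar $c$ (this direction exists once we include the query-bias degree of freedom in the projection, so the $Q_i$ need not span the entire hidden space). Then $B_{i,t}^{\text{meta}} = c\,p(t)$, which, after absorbing $c$ into $p$, equals the target $p(t)$ uniformly in $i$. For $\mathcal{B}_{\text{rel}} \subseteq \mathcal{B}_{\text{meta}}$, again use $S = \{0,\dots,T-1\}$ and exploit the position-dependent structure of $Q_i$ induced by APE or RoPE in the base Transformer: for each $t$, pick $W e_t$ so that the map $i \mapsto Q_i \cdot W e_t$ matches $i \mapsto b(i-t)$. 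Summing column-wise over $t$ reproduces $B_{i,j} = b(i-j)$.

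For the strict part, I would exhibit a concrete element of $\mathcal{B}_{\text{meta}} \setminus (\mathcal{B}_{\text{abs}} \cup \mathcal{B}_{\text{rel}})$. Take $S = \{t^*\}$ for some $1 \le t^* \le T-2$ and pick $W e_{t^*}$ non-orthogonal to the affine span of $\{Q_i\}$, so that the values $\{Q_i \cdot W e_{t^*}\}_i$ are not all equal. The resulting bias is zero on every column $j \ne t^*$ and non-constant in $i$ on column $t^*$. It cannot lie in $\mathcal{B}_{\text{rel}}$, since zero columns off $t^*$ force $b(i-j) = 0$ for every offset $i-j$ that appears, and as $(i,j)$ ranges over $\{(i,j):j\ne t^*\}$ these offsets exhaust $\{-(T-1),\dots,T-1\}$, forcing $b \equiv 0$ and contradicting the nonzero column at $t^*$. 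It cannot lie in $\mathcal{B}_{\text{abs}}$, since an absolute bias column must be constant in $i$, contradicting the chosen $W e_{t^*}$.

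The main obstacle will be the relative-bias containment: matching $Q_i \cdot W e_t$ to $b(i-t)$ as a function of $i$ for each fixed $t$ requires the map $i \mapsto Q_i$ to be rich enough that a single learned dual vector per meta-token position suffices. The cleanest route is to leverage that RoPE rotates each pair of coordinates of $Q_i$ by an angle linear in $i$, so $Q_i \cdot W e_t$ decomposes as a finite sum of sines and cosines in $(i-t)$ whose coefficients are picked by $W e_t$; a density argument on the resulting Fourier basis (restricted to integer offsets in $\{-(T-1),\dots,T-1\}$) closes the gap. The absolute-bias step is similar but simpler, and the strict-inclusion step is essentially a pigeonhole argument on the support and per-column profile of the bias matrix.
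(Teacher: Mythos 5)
Your proof is correct and overlaps substantially with the paper's on the containment half, but takes a genuinely different and arguably cleaner route on the strict-inclusion half. For containment, both you and the paper pick a meta-token at every position and try to solve $Q_i\cdot We_t=p(t)$ (resp. $=b(i-t)$) by choice of $\{e_t\}$ and $W$; the paper closes this by simply \emph{assuming} $Q_i$ is the $i$-th standard basis vector, which is not something the constructor controls, whereas you explicitly flag this as the obstacle and propose to close it via the Fourier structure that RoPE imposes on $i\mapsto Q_i$ (plus a query-bias coordinate for the absolute case). Your awareness of the issue is a point in your favor, though the Fourier/density step as stated is still only sketched and would need to contend with the fact that $Q_i$ depends on input tokens, not just position — the paper has the same unaddressed difficulty. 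For the strict inclusion, the two arguments diverge: the paper produces a \emph{content-dependent} bias $B^*_{i,j}=f(C_j)$ and argues that no fixed $p$ or $b$ can track $f(C_j)$ as the preceding context $C_j$ varies across inputs; you instead exhibit a single fixed matrix — zero off column $t^*$, non-constant in $i$ on column $t^*$ — and show by a combinatorial/pigeonhole argument that it cannot be written as $p(j)$ (column must be $i$-constant) or $b(i-j)$ (zero columns exhaust all offsets, forcing $b\equiv 0$). Your version is more concrete: it yields a specific matrix rather than appealing to variability over inputs, and it more directly matches the stated definitions of $\mathcal{B}_{\rm abs}$ and $\mathcal{B}_{\rm rel}$ as sets of fixed matrices. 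One small care point: your offset-exhaustion step implicitly needs $T\ge 3$ and $1\le t^*\le T-2$ (which you do assume) so that both extreme offsets $\pm(T-1)$ are witnessed by columns $j\ne t^*$; with that noted, the argument goes through.
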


\begin{proof}
We break this argument down into two parts $\rightarrow$ (i.) the forward direction, where we show that all absolute and relative biases without meta-tokens can be modeled by the meta-token model. 

\paragraph{(i) \(\mathcal{B}_{\rm abs}\cup\mathcal{B}_{\rm rel}\subseteq\mathcal{B}_{\rm meta}\).}
Every \(B\in\mathcal{B}_{\rm meta}\) is obtained by choosing meta‐token embeddings \(e_t\in\mathbb R^d\) at each position \(t\) and a linear head \(W\), so that the total bias at \((i,j)\) is
$B_{i,j} =\sum_{t}\;Q_i^\top W\,e_t\;\mathbf1_{j=t}$. 
\begin{itemize}
  \item \emph{Absolute case.}  Given \(p(j)\), set \(W\in\mathbb R^{1\times d}\) and choose each \(e_j\) so that
    $Q_i^\top W\,e_j = p(j) \hspace{1mm} \forall \hspace{1mm} i.$
    All other \(e_{t\neq j}\) are zero.  Then \(B_{i,j}=p(j)\).

  \item \emph{Relative case.}  Given \(b(i-j)\), place a meta‐token at every position \(j\).  Choose \(W\) and embeddings \(e_j\) so that
    $Q_i^\top W\,e_j = b(i-j)
      \hspace{1mm} \forall \hspace{1mm}i,j.$
      
    For instance, if we let \(W=\mathrm{I}d\) and arrange that \(e_j\) encodes the vector \(\bigl(b(1-j),b(2-j),\dots,b(T-j)\bigr)\), then \(Q_i^\top e_j=b(i-j)\) when \(Q_i\) is the \(i\)-th standard basis vector.  
\end{itemize}

Therefore, every absolute or relative bias (in $\mathcal{B}_{\rm abs}$ and $\mathcal{B}_{\rm rel}$) lies in $\mathcal{B}_{\rm meta}$.

\paragraph{(ii) There exists a bias $B^* \in \mathcal{B}_{\rm meta}$ such that $B^* \notin \mathcal{B}_{\rm abs}\cup\mathcal{B}_{\rm rel}$.}
Define a content‐dependent bias
$
B^*_{i,j}
= f\bigl(C_j\bigr)
$
where \(C_j\) is the full token context preceding position \(j\) and \(f\) is any non‐constant function.  Such a \(B^*\) arises by setting each meta-token embedding \(e_j=f(C_j)\) and \(W=\mathrm{I}d\), so \(B^*\in\mathcal B_{\rm meta}\). 

However, if there was \(B^*\in\mathcal B_{\rm abs}\), then there is \(p(j)\) with \(p(j)=f(C_j)\) for all \(j\) and all possible \(C_j\), which is impossible since \(C_j\) varies. Furthermore, if \(B^*\in\mathcal B_{\rm rel}\), then there is \(b(i-j)\) with \(b(i-j)=f(C_j)\) independent of \(i\); again, this condition is impossible to be satisfied. Therefore \(B^*\notin \mathcal{B}_{\rm abs}\cup\mathcal{B}_{\rm rel}\). 

As a result, we conclude that the biases represented by $\mathcal B_{\rm meta}$ contain the set of both absolute and relative biases without meta-tokens, and represent additional biases that cannot be represented without meta-tokens. 
\end{proof}

The result of Theorem \ref{thrm:biases} is that the introduction of meta-tokens strictly grows the expressivity of biases that may be represented, while still being entirely inclusive of the fixed realizable absolute and relative encoding biases. As a result, we do not "lose" anything representationally by introducing meta-tokens, from a positional biases standpoint. This enhanced expressive power also plays a role in enabling the model to learn to focus attention on relevant context spans, reinforcing the aforementioned sharpening effect.

\section{Example Prompts for Synthetic Tasks}
\label{examples}

We provide example prompts and target completions for each synthetic task used in our experiments. Each example illustrates the input format, the placement of \texttt{\_PAUSE\_} meta-tokens, and the expected model output.

\paragraph{List Recall}

\begin{quote}
\textbf{Prompt:}
\begin{verbatim}
Fruits: orange peach banana plum
Tools: hammer wrench level pliers
Sports: boxing baseball boxing golf
Spices: turmeric cardamom cumin cinnamon
Animals: cat tiger wolf fox
Professions: teacher nurse lawyer architect
Vegetables: onion cucumber broccoli cucumber
Instruments: piano clarinet violin piano
Professions: teacher nurse lawyer _PAUSE_ architect 
Q: What is item 3 of Professions? _PAUSE_
\end{verbatim}

\textbf{Completion:}
\begin{verbatim}
lawyer
\end{verbatim}
\end{quote}

\paragraph{Segment Counting}

\begin{quote}
\textbf{Prompt:}
\begin{verbatim}
Fruits: orange peach banana plum
Tools: _PAUSE_ hammer wrench level pliers _PAUSE_
Sports: boxing baseball boxing golf
Spices: turmeric cardamom cumin cinnamon
Q: How many times does wrench appear between the pauses around Tools? _PAUSE_
\end{verbatim}

\textbf{Completion:}
\begin{verbatim}
1
\end{verbatim}
\end{quote}

\paragraph{Parity}

\begin{quote}
\textbf{Prompt:}
\begin{verbatim}
Bits: 0 _PAUSE_ 1 0 0
Q: What is the XOR of all bits before this pause? _PAUSE_
\end{verbatim}

\textbf{Completion:}
\begin{verbatim}
0
\end{verbatim}
\end{quote}

\paragraph{Copying}

\begin{quote}
\textbf{Prompt:}
\begin{verbatim}
...as ever Mr _PAUSE_ . Darcy had been, was much
relieved by discerning such different feelings.

They had not been long together before Darcy told her that Bingley was
also coming to wait on her; and she had barely time to express her
satisfaction, and prepare for such a visitor, when Bingley’s quick step
was heard on the stairs, and in a moment he entered the room. All
Elizabeth’s anger against him had been long done away; but had she still
felt any, it could hardly have stood its ground against the unaffected
cordiality with which he expressed himself on seeing her again. He
inquired in a friendly, though general, way, after her family, and
looked and spoke with the same _PAUSE_ good-humoured ease that he had ever done.
Q: Copy the bracketed text. _PAUSE_
\end{verbatim}

\textbf{Completion:}
\begin{verbatim}
. Darcy had been, was much
relieved by discerning such different feelings.

They had not been long together before Darcy told her that Bingley was
also coming to wait on her; and she had barely time to express her
satisfaction, and prepare for such a visitor, when Bingley’s quick step
was heard on the stairs, and in a moment he entered the room. All
Elizabeth’s anger against him had been long done away; but had she still
felt any, it could hardly have stood its ground against the unaffected
cordiality with which he expressed himself on seeing her again. He
inquired in a friendly, though general, way, after her family, and
looked and spoke with the same.
\end{verbatim}
\end{quote}

\section{Broader Impacts Statement}\label{appendix:ethics}

Our work on learned meta-tokens and meta-attention offers a lightweight, data-efficient way to pre-train language models while demonstrating strong performance when fine-tuned for recall tasks. This suggests a path toward more capable, leaner language models that could be used to handle contexts such as like long legal or medical documents, extended multi-turn dialogues, or large codebases without resorting to prohibitively large architectures or expensive fine-tuning runs. Such models could bring real benefits to areas such as conversational agents for education or healthcare. Building off of prior literature that performs a more explicit learned retrieval from the context \citep{mohtashami2023randomaccess}, this could induce improved and efficient in-line retrieval over vast corpora. 

Our work relates strongly to the recent debates in the language modeling community on the impact of positional encoding, particularly around works such as NoPE \citep{nope}. We provide strong evidence that zeroing the positional encoding can improve performance, providing motivation for hybrid attention mechanisms such as RNoPE \citep{yang2025ropenopeagainnew}, and other, more efficient ways to pre-train language models with long-context modeling settings in mind. We note that advances in long-context modeling could introduce risks around misuse and unintended harm. More powerful context understanding over long ranges can fuel phishing text and distracted models, especially in the phase of noisy context. However, models trained on corpora without data pre-processing a priori may be subject to harmful behavior such as profane generations. In the context of our work, which uses standard, pre-filtered corpora, this issue is avoided; we encourage users to audit the data used for pre-training first.

\end{document}